\newcommand{\OT}{\mathrm{OT}}
\newcommand{\ADM}{\mathrm{ADM}}
\newcommand{\Tr}{\mathrm{Tr}}
\newcommand{\E}{\mathbb{E}}
\newcommand{\Probs}{\mathcal{P}}
\newcommand{\N}{\mathcal{N}}
\newcommand{\KL}{D_{\mathrm{KL}}}
\newcommand{\trace}{\mathrm{Tr}}
\newcommand{\Log}{\mathrm{Log}}
\renewcommand{\H}{\mathcal{H}}
\newcommand{\Ncal}{\mathcal{N}}
\newcommand{\X}{\mathcal{X}}
\newcommand{\R}{\mathbb{R}}
\newcommand{ \ep}{\varepsilon}
\newcommand{ \Lexp}{L^{\rm exp}_{\ep}}
\newcommand{ \Dep}{D_{\ep}}
\newcommand{ \OTep}{\operatorname{OT}_{\ep}}
\newcommand{ \gammaep}{\gamma^{\ep}}
\newcommand{\Pro}{\mathcal{P}}
\newcommand{\restr}[1]{\lower3pt\hbox{$|_{#1}$}}
\newcommand{\Fcep}{\mathcal{F}^{(c, \ep)}}
\newcommand{\ucep}{u^{(c,\ep)}}
\newcommand{\wcep}{w^{(c,\ep)}}
\DeclareMathOperator*{\argmin}{arg\,min}
\newtheorem{deff}{Definition}
\begin{document}
\title{Entropy-Regularized $2$-Wasserstein Distance between Gaussian Measures}


\author{Anton Mallasto$^*$ \and
        Augusto Gerolin  \and
        H\`a Quang Minh
}


\institute{A. Mallasto \at
              Aalto University, Department of Computer Science \\
              \email{anton.mallasto@aalto.fi}            \\
             $^*$Corresponding author
           \and
           A. Gerolin \at
              Vrije Universiteit Amsterdam, Department of Theoretical Chemistry \\
              \email{augustogerolin@gmail.com}
            \and 
            H.Q. Minh \at
            RIKEN Center for Advanced Intelligence Project \\
         \email{minh.haquang@riken.jp}
}

\date{}

\maketitle

\begin{abstract}
Gaussian distributions are plentiful in applications dealing in uncertainty quantification and diffusivity. They furthermore stand as important special cases for frameworks providing geometries for probability measures, as the resulting geometry on Gaussians is often expressible in closed-form under the frameworks. In this work, we study the Gaussian geometry under the entropy-regularized $2$-Wasserstein distance, by providing closed-form solutions for the distance and interpolations between elements. Furthermore, we provide a fixed-point characterization of a population barycenter when restricted to the manifold of Gaussians, which allows computations through the fixed-point iteration algorithm. As a consequence, the results yield closed-form expressions for the $2$-Sinkhorn divergence. As the geometries change by varying the regularization magnitude, we study the limiting cases of vanishing and infinite magnitudes, reconfirming well-known results on the limits of the Sinkhorn divergence. Finally, we illustrate the resulting geometries with a numerical study. 

\keywords{Sinkhorn divergences \and Multivariate Gaussian measures \and Optimal Transportation Theory}
\end{abstract}

\section{Introduction}
\emph{Optimal transport} (OT)~\cite{villani08} studies the geometry of probability measures through the lifting of a cost function between samples.  This is carried out by devising a coupling between two probability measures via a \emph{transport plan}, so that one measure is transported to another with minimal total cost. The resulting geometry offers a favorable way of comparing probability measures one to another, which has lead to considerable success in machine learning, especially in generative modelling~\cite{arjovsky17,deshpande18,pmlr-v97-dukler19a,mallasto19}, where one aims at training a model distribution to sample from a given data distribution, and computer vision, where OT provides intuitive metrics between images~\cite{rubner00}. Notably, OT can not only be used to derive divergences, but also metrics between probability distributions, referred to as the \emph{$p$-Wasserstein metrics}.

To ease the computational aspects of OT, entropic relaxation was introduced, which transforms the constrained convex problem of transportation into an unconstrained strictly convex problem~\cite{cuturi13}. This is carried out via considering the sum of the total cost and the Kullbackk-Leibler (KL) divergence, between the transport plan and the independent joint distribution, scaled by some regularization magnitude. In addition to computational aspects, the entropic regularization also betters statistical properties~\cite{Sommerfeld2017WassersteinDO}, specifically, the complexity of estimating the OT quantity between measures through sampling~\cite{genevay18sample,MenWee19,weed17}. Theoretical properties of the entropic regularization have been studied in e.g. metric geometry, machine learning and statistics~\cite{feydy18,genevay16,genevay17,GigTam18,MalMonGer19,ramdas2017,RipThesis}. It has also been applied in a variety of fields, including computer vision, density functional theory in chemistry, and inverse problems (e.g.~\cite{genevay17,GerGroGor19,Lunz18,patrini18}). 

The resulting problem has close relations to the \emph{Schr\"odinger problem}~\cite{Schr31}, which considers the most likely flow of a cloud of gas from an initial position to an observed position after a certain amount of time under a prior assumption on the evolution of the position, given by e.g. a Brownian motion. The resulting problem has found applications in fields such as mathematical physics, economics, optimization and probability~\cite{BorLewNus94,Csi75,peyre17,FraLor89,GalichonEconomics,RusIPFP,Zam15}). Connections to OT have been considered in e.g. \cite{cuturi13,galsal,LeoSurvey,rus93,rus98}.

OT is not the only instance of a geometric framework for probability measures. Other popular choices include \emph{information geometric divergences}~\cite{amari16,ay17} and \emph{integral probability metrics}~\cite{muller97}. In contrast to these methods, OT and entropic OT has the advantage of metrizing the weak$^*$-convergence of probability measures, which results in non-singular behavior when comparing measures of disjoint supports. On top of this, being able to decide the lifted cost function is important in applications, as the cost function can be used to incorporate modelling choices, determining which differences in samples are deemed most important. For example, the standard Euclidean metric is a poor choice for comparing images.

Gaussian distributions provide a meaningful testing ground for such frameworks since, in many cases, they result in closed-form expressions. In addition, the study of Gaussians under the OT framework result in useful divergences. In particular, divergences between centered Gaussians result in divergences between their corresponding covariance matrices. Both instances enjoy many applications in a plethora of fields, such as medical imaging~\cite{Dryden:2009}, computer vision~\cite{tuzel2006region,tuzel08,tuzel07}, brain computer interfaces~\cite{Congedo:BCIreview2017}, natural language processing~\cite{muzellec2018}, and assessing the quality of generative models~\cite{heusel17}. Notably, the $2$-Wasserstein metric between Gaussians  is known as the \emph{Bures metric} in quantum physics, where it is used to compare quantum states. Other popular divergences for Gaussians include the \emph{affine-invariant Riemannian metric} \cite{Pennec:IJCV2006}, corresponding to the Fisher-Rao distance between centered Gaussians, the Alpha Log-Determinant divergences \cite{Chebbi:2012Means}, corresponding to R\'enyi divergences between centered Gaussians, and the \emph{log-Euclidean metric} \cite{LogEuclidean:SIAM2007}. A survey of some of the most common divergences and their resulting geometry on Gaussians can be found in~\cite{feragen17}. More recently, applications have driven research into allowing determining optimal divergences for the task at hand, which has raised interest in studying interpolations between different divergences~\cite{amari2018information,cichocki15,thanwerdas19}. Generalizations of these divergences to the infinite-dimensional setting of Gaussian processes and covariance operators have also been considered \cite{Larotonda:2007,mallasto17,Masarotto:2018Procrustes,MinhSB:NIPS2014,Minh:LogDet2016}. 

The \emph{Sinkhorn divergence} has been proposed in OT, applying the entropic regularization to define a parametric family of divergences, interpolating from the OT quantity to a \emph{maximum mean discrepancy} (MMD), whose kernel is determined by the cost. In the present work, we provide a closed-form solution to the entropy-regularized $2$-Wasserstein distance between multivariate Gaussians, which can then be applied in the computation of the corresponding Sinkhorn divergence between Gaussians. In addition, we study the task of interpolating between two Gaussians under the entropy-regularized $2$-Wasserstein distance, and confirm known limiting properties of the divergences with respect to the regularization strength. Finally, we provide fixed-point expressions for the barycenter of population of Gaussians restricted to the Gaussian manifold, that can be employed in fixed-point iteration for computing the barycenter. The one-dimensional setting has been studied in~\cite{amari2018information,gentil2017,GerGroGor19}. The Schr\"odinger bridge between multivariate Gaussians has been considered in~\cite{chen15}, including the study of the limiting case of bringing the noise of the driving Brownian motion to $0$, resulting in the $2$-Wasserstein case, in~\cite{chen16}.

During the review process of the article at hand, analogous results of this paper was obtained independently by Janati et al. \cite{janati20}. In the barycenter problem, in \cite{janati20} the authors shows that the barycenter of Gaussians under the Sinkhorn divergence is a Gaussian, when restricted to the space of sub-Gaussian measures. This extends our Theorem \ref{thm:sinkhornbarycenter}, where we explicitly restrict to Gaussian instead of sub-Gaussian measures. Furthermore, the authors consider the setting of unbalanced Gaussian measures."

The paper is divided as follows: in Section~\ref{sec:back}, we briefly introduce the necessary background to develop the entropic OT theory of Gaussians, including the formulation of OT, entropic OT, and the corresponding dual and dynamical formulations. In Section~\ref{sec:sink}, we compute explicit solutions to the entropy-relaxed $2$-Wasserstein distance between Gaussians, including the dynamical formulation that allows for interpolation. As a consequence, we derive a closed-form solution for the corresponding Sinkhorn divergence. In Section~\ref{sec:barycenter}, we study the \emph{barycenters} of populations of Gaussians, restricted to the Gaussian manifold. We derive fixed-point expressions for the entropic $2$-Wasserstein distance and the $2$-Sinkhorn divergence. Finally, in Section~\ref{sec:numerics}, we illustrate the resulting interpolative and barycentric schemes. Especially, we consider varying the regularization magnitude, visualizing the interpolation between the OT and MMD problems in the Sinkhorn case~\cite{feydy18,genevay17,ramdas2017}.

\section{Background}\label{sec:back}
In this section, we start by recalling the essential background for optimal transport (OT) and its entropy-relaxed version. More in-depth exposition for OT can be found in~\cite{villani08}, and for computational aspects and entropic OT in~\cite{peyre17}.

\textbf{Optimal transport.} Let $(\X,d)$ be a metric space equipped with a lower semi-continuous \emph{cost function} $c:\X\times \X \to \mathbb{R}_{\geq 0}$. Then, the optimal transport problem between two probability measures $\mu, \nu \in \Probs(\X)$ is given by
\begin{equation}\label{eq:OT}
    \OT(\mu, \nu) = \min_{\gamma\in \ADM(\mu,\nu)}\E_\gamma[c],
\end{equation}
where $\ADM(\mu,\nu)$ is the set of joint probabilities with marginals $\mu$ and $\nu$, and $\E_\mu[f]$ denotes the expected value of $f$ under $\mu$
\begin{equation}
\E_\mu[f] = \int_\X f(x) {\rm d}\mu(x).    
\end{equation}
Additionally, by $\E[\mu]$ we denote the expectation of $\mu$. A minimizer of \eqref{eq:OT} is denoted by $\gamma_{\rm opt}$ and called a \emph{transport plan}.

The OT problem admits the following \emph{Kantorovich (dual) formulation}
\begin{equation}\label{eq:ot_dual}
    \OT(\mu,\nu) = \max\limits_{\varphi, \psi \in \ADM(c)} \left\lbrace
    \E_\mu[\varphi] + \E_\nu[\psi]
    \right\rbrace,
\end{equation}
where $(\varphi, \psi) \in \ADM(c)$ is required to satisfy
\begin{equation}
    \varphi(x) + \psi(y) \leq c(x,y),\quad \forall(x,y) \in \X\times \X.
\end{equation}
Potentials  $\varphi_{\rm{opt}}, \psi_{\rm{opt}}$ achieving the maximum in \eqref{eq:ot_dual} are called \emph{Kantorovich potentials}.

\textbf{Wasserstein distances.} The $p$-Wasserstein distance $W_p$ between $\mu$ and $\nu$ is defined as
\begin{equation}
    W_p(\mu,\nu) = \OT_{d^p}(\mu, \nu)^{\frac{1}{p}},
\end{equation}
where $d$ is a metric on $X$ and $p\geq 1$. The case $p=2$ is particularly interesting, as the resulting metric is then induced by a pseudo-Riemannian metric structure \cite{AmGiSa,malago18}. 

\textbf{$2$-Wasserstein distance between Gaussians.} One of the rare cases where the $2$-Wasserstein distance admits a closed form solution is between two multivariate Gaussian distributions $\mu_i=\N(m_i,K_i)$, $i=0,1$ with $d(x,y) = \|x-y\|$, which is given by \cite{dowson82,givens84,knott84,olkin82}
\begin{equation}\label{eq:gaus_was_d}
    W_2^2(\mu_0, \mu_1) = ||m_0 -m_1||^2 + \Tr(K_0) + \Tr(K_1) - 2 \Tr\left(K_1^\frac{1}{2} K_0 K_1^\frac{1}{2}\right)^\frac{1}{2}.
\end{equation}
It can be shown that \eqref{eq:gaus_was_d} is induced by a \emph{Riemannian metric} in the space of $n$-dimensional Gaussians $\Ncal(\R^n)$,  with the metric $g_K:T_K\Ncal(\R^n)\times T_K\Ncal(\R^n)\to \R$ given by~\cite{takatsu11}
\begin{equation}
    g_K(U,V) = \Tr\left[v_{(K,U)}Kv_{(K,V)}\right], \quad \forall ~K\in \Ncal(\R^n),~ U,V \in T_K\Ncal(\R^n),
\end{equation}
where $v_{(K,V)}$ denotes the unique symmetric matrix solving the \emph{Sylvester equation}
\begin{equation}
V = Kv_{(K,V)} + v_{(K,V)}K.
\end{equation}

Moreover, given $\Ncal(m_0,K_0),\Ncal(m_1,K_1)\in \Ncal(\R^n)$, the geodesics under the metric \eqref{eq:gaus_was_d} are given by $\Ncal(m_t, K_t)$, with~\cite{mccann97}
\begin{equation}\label{eq:gaus_was_geodesic}
\begin{aligned}
    m_t =& (1-t)m_0 + tm_1, \\
    K_t =& \left((1-t)I + tK_0^{-\frac{1}{2}}\left(K_0^\frac{1}{2}K_1K_0^\frac{1}{2}\right)^\frac{1}{2}K_0^{-\frac{1}{2}}\right)K_0\\
    &\times\left((1-t)I + tK_0^{-\frac{1}{2}}\left(K_0^\frac{1}{2}K_1K_0^\frac{1}{2}\right)^\frac{1}{2}K_0^{-\frac{1}{2}}\right)\\
    &= (1-t)^2K_0 + t^2K_1 + t(1-t)[(K_0K_1)^{1/2} + (K_1K_0)^{1/2}].
\end{aligned}
\end{equation}
We remark that Eq.(\ref{eq:gaus_was_d}) is valid for all Gaussian distributions, including the case when $K_0, K_1$ are positive semi-definite. This is in contrast to the affine-invariant Riemannian distance $||\log(K_0^{-1/2}K_1K_0^{-1/2})||_F$, the Log-Euclidean distance
$||\log(K_0)-\log(K_1)||_F$, and the Kullback-Leibler divergence (see below), which require that $K_0,K_1$ be strictly positive definite.

Finally, the $2$-Wasserstein barycenter $\bar{\mu}$ of a population of probability measures $\mu_i$ with weights $\lambda_i\geq 0$, $i=1,2,..,N$ and $\sum_{i=1}^N\lambda_i = 1$, is defined as the minimizer
\begin{equation}
    \bar{\mu} := \argmin\limits_{\mu \in \Probs(\R^n)} \sum_{i=1}^N \lambda_i W_2^2(\mu,\mu_i).
\end{equation}
When the population consists of Gaussians $\mu_i = \Ncal(m_i, K_i)$, one can show that the barycenter is Gaussian given by $\bar{\mu} = \Ncal(\bar{m}, \bar{K})$, where $\bar{m}$, $\bar{K}$ satisfy~\cite[Thm. 6.1]{agueh11}
\begin{equation}\label{eq:wasserstein_barycenter}
    \bar{m} = \sum_{i=1}^N \lambda_i m_i,\quad
    \bar{K} = \sum_{i=1}^N \lambda_i \left(K^\frac{1}{2}K_iK^\frac{1}{2}\right)^\frac{1}{2}.
\end{equation}

\textbf{Entropic relaxation.}
Let $\mu, \nu \in \Probs(X)$ with densities $p_\mu$ and $p_\nu$. Then, we denote by
\begin{equation}
    \KL(\mu || \nu) = - \E_\mu\left[\log\frac{p_\nu}{p_\mu}\right],
\end{equation}
the \emph{Kullback-Leibler divergence} (KL-divergence) between $\mu$ and $\nu$. The \emph{differential entropy} of $\mu$ is given by
\begin{equation}
    H(\mu) = -\E_\mu[\log p_\mu]. 
\end{equation}
For a product measure, we have the identity
\begin{equation}\label{eq:kl_identity}
    \KL(\gamma || \mu_0 \otimes \mu_1) = H(\mu_0) + H(\mu_1) - H(\gamma).
\end{equation}

A special case that will be used later in this work is the KL-divergence between two non-degenerate multivariate Gaussian distributions $\mu_0 = \N(m_0, K_0)$ and $\mu_1 = \N(m_1, K_1)$ when $X = \mathbb{R}^n$, which is given by
\begin{equation}
\begin{aligned}
    \KL(\mu || \nu) =& \frac{1}{2}\left(
    \Tr\left(K_0^{-1}K_1\right)
    + \left(m_1 - m_0\right)^T K_0^{-1} \left(m_1 - m_0\right) \phantom{\frac{K}{K}}\right.\\
    &\left. \phantom{\frac{K}{K}}- n + \ln \left( \frac{\det K_1}{\det K_0}\right)
    \right),
\end{aligned}
\end{equation}
and for the entropy we have
\begin{equation}
    H(\mu_0) = \frac{1}{2}\log \det \left(2\pi e K_0\right).
\end{equation}

Given $\epsilon > 0$, we relax \eqref{eq:OT} with a KL-divergence term between the transport plan and the independent joint distribution as, yielding the \emph{entropic OT problem}~\cite{cuturi13}
\begin{equation}\label{eq:mainKL}
    \OT_c^\epsilon(\mu, \nu) = \min_{\gamma\in \ADM(\mu,\nu)}\left\lbrace\E_\gamma[c]
    + \epsilon \KL(\gamma || \mu \otimes \nu) \right\rbrace,
\end{equation}
which yields a strictly convex problem with respect to $\gamma$. Moreover, this problem is numerically more favorable to solve \eqref{eq:OT} compared, for instance, to the \emph{Hungarian} and the \emph{auction algorithm}, due to the Sinkhorn-Knopp algorithm. As shown, for instance in  \cite{BorLewNus94,Csi75,DMaGer19,GigTam18,RusIPFP}, the above problem has a unique minimizer given by
\begin{equation}
\gammaep = \alpha^{\ep}(x)\beta^{\ep}(y)k(x,y)\mu(x)\nu(y),
\end{equation}
if and only if there exists functions $\alpha^\ep$ and $\beta^\ep$ such that
\begin{equation}\label{intro:SchSys}
\begin{aligned}
     \alpha^{\ep}(x)\E_\nu\left[\beta^{\ep}k(x,\cdot)\right] &= 1, \\
     \beta^{\ep}(y)\E_\mu\left[\alpha^{\ep}k(\cdot,y)\right] &= 1,\\
\end{aligned}
\end{equation}
where $k(x,y) = \exp\left(-\frac{1}{\epsilon}c\right)$ denotes the \emph{Gibbs kernel}. We call $\gamma^\epsilon$ an \emph{entropic transport plan}. Moreover, when $\ep\to 0$, $\gammaep$ converges to $\gamma_{\rm opt}$, a solution of the OT problem \eqref{eq:OT} \cite{peyre17,GerKauRajEnt,LeoSurvey}; while when $\ep\to\infty, \gammaep$ converges to the independent coupling $\gamma^\infty = \mu\otimes\nu$ \cite{genevay17,ramdas2017}. The latter property shows in particular that, for large $\ep$, the entropy-Regularized OT behaves like an inner product and not like a norm. In linear algebra, the polarization formula is the usual way of defining a norm from a inner product. That is the main idea of Sinkhorn divergence.

\textbf{Sinkhorn divergence.} The KL-divergence term in $\OT_c^\epsilon$ acts as a bias, as discussed in \cite{feydy18}. This can be removed by defining the \emph{p-Sinkhorn divergence} as
\begin{equation}\label{def:sinkhorn}
    S_p^\epsilon(\mu, \nu) = \OT_{d^p}^\epsilon(\mu, \nu) - \frac{1}{2}(\OT_{d^p}^\epsilon(\mu,\mu) + \OT_{d^p}^\epsilon(\nu,\nu) ).
\end{equation}

As shown in \cite{feydy18} if, for example, $c=d^p, p\geq 1$ the Sinkhorn divergences metrizes the convergence in law in the space of probability measures.

\textbf{Entropy-Kantorovich duality.} In this subsection we summarize well-known results on the Entropy-Kantorich. For further details and proofs, we refer the reader to \cite{DMaGer19}.

Given a probability measure $\mu$, the class of Entropy-Kantorovich potentials is defined by the set of measurable functions $\varphi$ on $\R^n$ satisfying
\begin{equation}
\Lexp(\R^n,\mu) = \left\lbrace \varphi:\R^n \to [-\infty, \infty[ \, : \,
0<\E_\mu\left[\exp\left(\frac{1}{\epsilon} \varphi\right)\right] < \infty
\right\rbrace.
\end{equation}

Then, given $c=d^2$, where $d(x,y) = \|x-y\|$, $\varphi\in \Lexp(\R^n,\mu_0)$ and $\psi\in \Lexp(\R^n,\mu_1)$, the \emph{entropic Kantorovich (dual) formulation} of $\OT^\epsilon_{d^2}(\mu,\nu)$ is given by
\cite{DMaGer19,feydy18,genevay17,GigTamBB18,LeoSurvey},
\begin{equation}\label{kanto}
\begin{aligned}
\OT^\epsilon_{d^2}(\mu_0,\mu_1) =& \sup\limits_{\varphi, \psi}\left\lbrace\E_{\mu_0}[\varphi] + \E_{\mu_1}[\psi]\phantom{\frac{K}{K}}\right.\\
&\left.-\ep \left(\E_{\mu_0 \otimes \mu_1} \left[\exp\left(\frac{(\varphi\oplus \psi)-d^2}{\ep}\right)\right]-1\right)\right\rbrace,
\end{aligned}
\end{equation}
where $\left(\varphi \oplus \psi\right)(x,y) = \varphi(x) + \psi(y)$, $\varphi\in \Lexp(\R^n,\mu_0)$, and $\psi\in \Lexp(\R^n,\mu_1)$.

Finally, we are able to state the full duality theorem between the primal \eqref{eq:mainKL} and the dual problem \eqref{kanto}. The Theorem below is a particular case of Theorem 2.8 and Proposition 2.11 in \cite{DMaGer19}, when we are in the Euclidian space with distance square cost function. \medskip

\begin{theorem}\label{thm:equiv_comp}
Let $\ep>0$ be a positive number, $c=d^2$, $\mu_0,\mu_1 \in \Pro(\R^n)$ be probability measures. Then, the supremum in \eqref{kanto} is attained for a unique couple $(\varphi^\epsilon, \psi^\epsilon)$ (up to the trivial transformation $(\varphi^\epsilon, \psi^\epsilon) \to (\varphi^\epsilon + \alpha, \psi^\epsilon - \alpha)$). Moreover, the following are equivalent:
\begin{itemize}
\item[\textbf{a.}] \emph{(Maximizers)} $\varphi^\epsilon$ and $\psi^\epsilon$ are maximizing potentials for \eqref{kanto}.

\item[\textbf{b.}] \emph{(Schr\"{o}dinger system)} Let
\begin{equation}
\gamma^\epsilon=\exp\left(\frac{1}{\epsilon}\left(\varphi^\epsilon \oplus \psi^\epsilon-d^2\right)\right)\mu_0\otimes \mu_1,    
\end{equation}
then $\gamma^\epsilon \in \ADM(\mu_0, \mu_1)$. Furthermore, $\gamma^\epsilon$ is the (unique) minimizer of the problem \eqref{eq:mainKL}.
\end{itemize}
\end{theorem}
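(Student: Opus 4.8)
The plan is to treat Theorem~\ref{thm:equiv_comp} as a statement in concave (Lagrangian) duality, exploiting the fact that the dual functional in \eqref{kanto},
$$J(\varphi,\psi) = \E_{\mu_0}[\varphi] + \E_{\mu_1}[\psi] - \ep\left(\E_{\mu_0\otimes\mu_1}\left[\exp\left(\tfrac{(\varphi\oplus\psi)-d^2}{\ep}\right)\right] - 1\right),$$
is concave, being the sum of two affine terms and the negative of a convex exponential integral. For a concave functional a point is a global maximizer if and only if it is a critical point, so I would reduce both assertions of the theorem to an analysis of the first-order (Euler--Lagrange) conditions.

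First I would establish existence and uniqueness of the maximizing pair. Uniqueness up to the translation $(\varphi,\psi)\mapsto(\varphi+\alpha,\psi-\alpha)$ comes from strict concavity of $J$ modulo this one-dimensional symmetry: the second variation of the exponential term in a direction $(\eta,\zeta)$ is proportional to $-\int(\eta(x)+\zeta(y))^2\,d\gamma^\epsilon$, whose null space consists exactly of pairs with $\eta(x)+\zeta(y)$ constant, i.e.\ the translation direction. Existence is the genuinely hard part and the main obstacle. The feasible set $\Lexp(\R^n,\mu_0)\times\Lexp(\R^n,\mu_1)$ is not compact, so a naive direct method fails; I would instead restrict to pairs related by the soft (entropic) $c$-transform $\varphi(x)=-\ep\log\E_{\mu_1}[\exp(\tfrac{\psi-d^2}{\ep})]$, for which the finiteness conditions defining $\Lexp$ yield equicontinuity and uniform integrability, then extract a limit of a maximizing sequence and pass to the limit by Fatou/dominated convergence justified by those same bounds.

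Next I would compute the Gateaux derivative of $J$. Differentiating in the $\varphi$-direction along a test function $\eta$ and setting the result to zero gives $\E_{\mu_1}[\exp(\tfrac{(\varphi\oplus\psi)-d^2}{\ep})]=1$ for $\mu_0$-a.e.\ $x$, which states exactly that the first marginal of $\gamma^\epsilon=\exp(\tfrac{1}{\ep}(\varphi\oplus\psi-d^2))\,\mu_0\otimes\mu_1$ equals $\mu_0$; the $\psi$-variation gives the second marginal $\mu_1$. These are the two equations of the Schr\"odinger system \eqref{intro:SchSys} (with $\alpha^\ep=e^{\varphi/\ep}$, $\beta^\ep=e^{\psi/\ep}$), and together they say $\gamma^\epsilon\in\ADM(\mu_0,\mu_1)$. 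Since $J$ is concave, these critical-point equations hold if and only if $(\varphi,\psi)$ is a maximizer, which is precisely the asserted equivalence of \textbf{a} and \textbf{b}.

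It remains to identify $\gamma^\epsilon$ as the unique primal minimizer. Here I would use the algebraic identity, valid for every $\gamma\in\ADM(\mu_0,\mu_1)$: writing $\log\frac{\d\gamma}{\d(\mu_0\otimes\mu_1)}=\log\frac{\d\gamma}{\d\gamma^\epsilon}+\tfrac{1}{\ep}(\varphi\oplus\psi-d^2)$ and using that $\gamma$ has marginals $\mu_0,\mu_1$, one obtains
$$\E_\gamma[d^2]+\ep\,\KL(\gamma\,\|\,\mu_0\otimes\mu_1)=\ep\,\KL(\gamma\,\|\,\gamma^\epsilon)+\E_{\mu_0}[\varphi]+\E_{\mu_1}[\psi].$$
Because $\KL(\gamma\,\|\,\gamma^\epsilon)\ge 0$ with equality iff $\gamma=\gamma^\epsilon$, and $\gamma^\epsilon$ is itself admissible by the previous step, the right-hand side is minimized exactly at $\gamma=\gamma^\epsilon$; this shows simultaneously that $\gamma^\epsilon$ is the unique minimizer of \eqref{eq:mainKL} and that strong duality holds, the common value being $\E_{\mu_0}[\varphi]+\E_{\mu_1}[\psi]$ (the dual penalty vanishes since the total mass of $\gamma^\epsilon$ is one). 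Beyond the existence argument, the only points requiring care are the justification of differentiating under the integral sign and the finiteness of the entropy terms, both of which follow from the defining integrability of $\Lexp$.
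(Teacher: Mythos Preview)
The paper does not actually prove Theorem~\ref{thm:equiv_comp}. It is stated in the background section as a known result and attributed to \cite{DMaGer19}: ``The Theorem below is a particular case of Theorem 2.8 and Proposition 2.11 in \cite{DMaGer19}, when we are in the Euclidian space with distance square cost function.'' So there is no proof in the paper to compare your proposal against.

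That said, your outline is the standard convex-analytic route and is essentially what is carried out in the cited reference. The ingredients you identify --- concavity of $J$, the Gateaux-derivative computation yielding the Schr\"odinger system as the first-order condition, and the KL-decomposition identity
\[
\E_\gamma[d^2]+\ep\,\KL(\gamma\,\|\,\mu_0\otimes\mu_1)=\ep\,\KL(\gamma\,\|\,\gamma^\epsilon)+\E_{\mu_0}[\varphi]+\E_{\mu_1}[\psi]
\]
to pin down the primal minimizer --- are exactly the right ones, and your uniqueness argument via the null space of the second variation is correct. Your honest flagging of existence as the nontrivial part is appropriate: in the cited work this is handled precisely along the lines you sketch, by restricting to pairs related by the entropic $c$-transform and extracting limits, with the integrability class $\Lexp$ supplying the needed bounds. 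For the purposes of this paper, which only \emph{uses} the theorem, your level of detail would already be more than what appears here.
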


Elements of the pair $(\varphi^\epsilon, \psi^\epsilon)$ reaching a maximum in \eqref{kanto} are called \emph{entropic Kantorovich potentials}. Finally, a relationship between $\alpha^\epsilon,\beta^\epsilon$ in \eqref{intro:SchSys}, and the entropic Kantorovich potentials $\varphi^\epsilon, \psi^\epsilon$ above, is according to Theorem~\ref{thm:equiv_comp} given by
\begin{equation}
    \varphi^\epsilon = \epsilon \log \alpha^\epsilon,\quad \psi^\epsilon = \epsilon \log \beta^\epsilon.
\end{equation}

Using the dual formulation, we can show the following.

\begin{proposition}\label{prop:convex_entropic_transport}
Let $\mu,\nu\in \Probs(\R^n)$. Then, $\OT^\epsilon_c(\mu,\nu)$ is strictly convex in both arguments.
\end{proposition}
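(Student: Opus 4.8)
The plan is to prove strict convexity separately in each argument; since the construction below is symmetric in the two marginals, I would treat the map $\mu \mapsto \OT_c^\epsilon(\mu,\nu)$ with $\nu$ fixed, the second argument being handled by the mirror construction (building the competitor from the second marginals). Fix $\mu_0 \neq \mu_1$ in $\Probs(\R^n)$, a parameter $t \in (0,1)$, and set $\mu_t = (1-t)\mu_0 + t\mu_1$. By Theorem~\ref{thm:equiv_comp} each problem $\OT_c^\epsilon(\mu_i,\nu)$, $i=0,1$, has a unique minimizer $\gamma_i \in \ADM(\mu_i,\nu)$, and I would keep these at hand throughout.

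For the convexity inequality I would use the plain competitor $\gamma_t := (1-t)\gamma_0 + t\gamma_1$. Its first marginal is $(1-t)\mu_0 + t\mu_1 = \mu_t$ and its second marginal is $\nu$, so $\gamma_t \in \ADM(\mu_t,\nu)$ is admissible, though generally suboptimal. Two facts then close the convexity: first, $\gamma \mapsto \E_\gamma[c]$ is linear, so $\E_{\gamma_t}[c] = (1-t)\E_{\gamma_0}[c] + t\E_{\gamma_1}[c]$; second, the KL-divergence is jointly convex in its pair of arguments, and because $\mu_t \otimes \nu = (1-t)(\mu_0\otimes\nu) + t(\mu_1\otimes\nu)$, this gives $\KL(\gamma_t \,\|\, \mu_t\otimes\nu) \le (1-t)\KL(\gamma_0\,\|\,\mu_0\otimes\nu) + t\KL(\gamma_1\,\|\,\mu_1\otimes\nu)$. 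Inserting the suboptimal $\gamma_t$ into the primal problem \eqref{eq:mainKL} and combining the two facts yields $\OT_c^\epsilon(\mu_t,\nu) \le (1-t)\OT_c^\epsilon(\mu_0,\nu) + t\OT_c^\epsilon(\mu_1,\nu)$.

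For the strict inequality I would examine when equality can hold. Equality forces both the joint-convexity estimate for KL to be an equality and $\gamma_t$ to be the unique minimizer for $(\mu_t,\nu)$. Equality in joint convexity of KL, through the log-sum inequality underlying it, forces the Radon--Nikodym densities to coincide, $\tfrac{d\gamma_0}{d(\mu_0\otimes\nu)} = \tfrac{d\gamma_1}{d(\mu_1\otimes\nu)}$ almost everywhere. Writing each side in the Schr\"odinger product form $\alpha_i(x)\beta_i(y)k(x,y)$ with $k = \exp(-c/\epsilon)$, furnished by Theorem~\ref{thm:equiv_comp}, this makes the two factorizations agree up to the trivial shift, so I may use a single pair $(\alpha,\beta)$. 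Subtracting the two second-marginal identities from the Schr\"odinger system \eqref{intro:SchSys}, namely $\beta(y)\E_{\mu_0}[\alpha k(\cdot,y)] = 1 = \beta(y)\E_{\mu_1}[\alpha k(\cdot,y)]$, then gives $\int \alpha(x)k(x,y)\,d(\mu_0-\mu_1)(x) = 0$ for $\nu$-a.e.\ $y$.

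The hard part will be converting this last identity into $\mu_0 = \mu_1$. Since $\alpha > 0$, it says the signed measure $\alpha\,d(\mu_0-\mu_1)$ is annihilated by the integral operator with Gibbs kernel $k$. For the squared-distance cost $c=d^2$ the kernel $k(x,y)=\exp(-\|x-y\|^2/\epsilon)$ is a universal (characteristic) kernel, so that provided $\operatorname{supp}\nu = \R^n$ the vanishing on $\operatorname{supp}\nu$ extends by continuity to all $y$ and forces $\alpha\,d(\mu_0-\mu_1)=0$, hence $\mu_0=\mu_1$, contradicting $\mu_0\neq\mu_1$. This contradiction upgrades the convexity inequality to a strict one. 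I expect the delicate points to be exactly the injectivity/characteristic property of $k$ and the role of $\operatorname{supp}\nu$; an equivalent but perhaps cleaner phrasing of the same obstacle is to note that the mixture $\gamma_t$ can coincide with the unique Schr\"odinger-form optimal plan of $(\mu_t,\nu)$ only when the factorizations of $\gamma_0$ and $\gamma_1$ align, which again reduces to $\mu_0=\mu_1$.
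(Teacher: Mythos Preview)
Your approach is genuinely different from the paper's. The paper argues on the \emph{dual} side: writing the Kantorovich dual functional $D(\varphi,\psi;\mu,\nu)$ from~\eqref{kanto}, one observes that for fixed $(\varphi,\psi,\nu)$ this is \emph{affine} in $\mu$. Taking $(\varphi,\psi)$ to be the optimal pair for $(\mu_t,\nu)$ and splitting $\mu_t = t\mu_0 + (1-t)\mu_1$ gives $\OT_c^\epsilon(\mu_t,\nu)$ as a convex combination of the dual values at $\mu_0$ and $\mu_1$, each of which is bounded above by the corresponding supremum $\OT_c^\epsilon(\mu_j,\nu)$; strictness is then asserted via uniqueness of the dual maximizer (Theorem~\ref{thm:equiv_comp}). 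This is shorter than your primal route, which needs the joint convexity of $\KL$ and then a separate analysis of the equality case.

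Both arguments, however, land on the \emph{same} residual question for strictness: can a single pair of Schr\"odinger potentials $(\alpha,\beta)$ solve the system~\eqref{intro:SchSys} simultaneously for $(\mu_0,\nu)$ and $(\mu_1,\nu)$ when $\mu_0\neq\mu_1$? The paper simply asserts this is impossible; you correctly unpack it as the identity $\int \alpha(x)k(x,y)\,d(\mu_0-\mu_1)(x)=0$ for $\nu$-a.e.\ $y$ and note that closing the argument requires the Gaussian kernel to be characteristic together with some control on $\operatorname{supp}\nu$. So your analysis is in fact more candid about the delicate point than the paper's proof; neither argument is fully general in $c$ or in $\nu$, but both are adequate in the Gaussian setting the paper ultimately cares about. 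One small refinement: your equality condition $\tfrac{d\gamma_0}{d(\mu_0\otimes\nu)}=\tfrac{d\gamma_1}{d(\mu_1\otimes\nu)}$ from the log-sum inequality only holds on the overlap of the supports of $\mu_0\otimes\nu$ and $\mu_1\otimes\nu$, so your deduction that a single $(\alpha,\beta)$ works already implicitly uses that $\mu_0$ and $\mu_1$ have common support; again harmless for Gaussians, but worth flagging.
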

\begin{proof}
Let $\mu_t$ = $t\mu_0 + (1-t)\mu_1$, and $(\varphi_j,\psi_j)$ be the entropic Kantorovich potentials associated with $\OT_c^\epsilon(\mu_j, \nu)$ for $j=0,1$, and $(\varphi, \psi)$ for $\OT_c^\epsilon(\mu_t, \nu)$. Then, using the dual formulation~\eqref{kanto}, we have
\begin{equation}
\begin{aligned}
\OT_c^\epsilon(\mu_t,\nu)=& t\left(\E_{\mu_0}[\varphi] + \E_{\nu}[\psi]\right)
+(1-t)\left(\E_{\mu_1}[\varphi] + \E_{\nu}[\psi]\right)\\
&- \epsilon t \left(\E_{\mu_0 \otimes \nu}\left[
\exp\left(\frac{(\varphi \otimes \psi)-c}{\epsilon}\right)\right]-1\right)\\
&- \epsilon (1-t)\left(\E_{\mu_1 \otimes \nu}\left[
\exp\left(\frac{(\varphi \otimes \psi)-c}{\epsilon}\right)\right]-1\right)\\
<&t\left(\E_{\mu_0}[\varphi_0] + \E_{\nu}[\psi_0]\right)
+(1-t)\left(\E_{\mu_1}[\varphi_1] + \E_{\nu}[\psi_1]\right)\\
&- \epsilon t \left(\E_{\mu_0 \otimes \nu}\left[
\exp\left(\frac{(\varphi_0 \otimes \psi_0)-c}{\epsilon}\right)\right]-1\right)\\
&- \epsilon (1-t)\left(\E_{\mu_1 \otimes \nu}\left[
\exp\left(\frac{(\varphi_1 \otimes \psi_1)-c}{\epsilon}\right)\right]-1\right)\\
=& t\OT_c^\epsilon(\mu_0,\nu) + (1-t)\OT_c^\epsilon(\mu_1,\nu),
\end{aligned}
\end{equation}
where the first equality results from linearity of expectations, and the inequality from noticing that the pair $(\varphi, \psi)$ is a competior for $(\varphi_j, \psi_j)$, $j=0,1$, but due to uniqueness of the entropic Kantorovich potentials (up to scalar additives, Theorem~\ref{thm:equiv_comp}), $(\varphi,\psi)$ cannot be equal to $(\varphi_0,\psi_0)$ and $(\varphi_1,\psi_1)$ (unless $\mu_0 = \mu_1)$, and will thus return lower values.
\qed
\end{proof}

\textbf{Dynamical formulation of entropy relaxed optimal transport.} Analogously to unregularized OT theory, the entropic-regularization of OT with distance cost admits a \emph{dynamical} (aka \emph{Benamou-Brenier}) formulation. 

In the following, we again consider the particular case when the cost function is given by $c(x,y) = \| x-y\|^2$. Then, we can write \eqref{eq:mainKL} as~\cite{GigTamBB18,LeoSurvey}
\begin{equation}
\OT^\epsilon_{d^2}(\mu_0,\mu_1) = \min_{(\mu^\epsilon_t,v_t)}\int_0^1\E_{\mu^\epsilon_t}\left[\|v_t\|^2\right] dt + H(\mu_0) + H(\mu_1),
\end{equation}
where $t\in[0,1]$, $\mu^\epsilon_0=\mu_0$, $\mu^\epsilon_1 = \mu_1$, and 
\begin{equation}
\partial_t \mu^\epsilon_t + \nabla\cdot (v_t \mu^\epsilon_t) = \frac{\ep}{2}\Delta \mu^\epsilon_t.    
\end{equation}

where the minimum must be understood as taken among all couples $(\mu^\epsilon_t,v_t)$ solving the continuity equation in the distributional sense (see appendix A); moreover, the minimum is attained if and only if $(\mu^\epsilon_t,v_t) = (\mu^\epsilon_t,\nabla\phi_t^\ep)$, for a potential $\phi_t^\ep:\R^d\to\R$, which is defined in the following via the entropic potentials. The resulting $\mu_t^\epsilon$ is called the \emph{entropic interpolation} between $\mu_0$ and $\mu_1$.

The solution can be characterized by (while abusing the notation and writing $\mu(x)$ for the density of $\mu$, which will be done throughout this work)
\begin{equation}
\gammaep(x,y) = \alpha^{\ep}(x)\beta^{\ep}(y) \exp\left(-\frac{1}{\epsilon}\|x-y\|^2\right) \mu_0(x)\mu_1(y),
\end{equation}
 in \eqref{intro:SchSys} of the static problem \eqref{eq:mainKL} in conjunction with the heat flow allows us to compute the entropic interpolation from $\mu_0$ to $\mu_1$, which is given by \cite{GigTamBB18,LeoSurvey,RipThesis}
\begin{equation}\label{eq:rhot}
\begin{aligned}
\mu^\epsilon_t&=\H^{\mu_0}_{t\ep}(\alpha^{\ep})\,\H^{\mu_1}_{(1-t)\ep}(\beta^{\ep}),\\
\H^{\mu}_s[f] &= \int_{\R^n} \frac{1}{\sqrt{2\pi s}}\exp\left(-\frac{1}{s}\| x-z\|^2\right)f(z)\mu(z){\rm d} z,
\end{aligned}
\end{equation}
and $\alpha^{\ep}$,$\beta^{\ep}$ are the Entropy-Kantorovich potentials solving the system \eqref{intro:SchSys}. In particular, we have that 
\begin{equation}
\label{eq:sch1}
\alpha^\ep(x) \H^{\mu_1}_{\ep}(\beta^\ep)(x)=1,\quad \beta^\ep(y)\,\H^{\mu_0}_{\ep}(\alpha^\ep)(y)=1.
\end{equation}
In particular, when we send the regularization parameter $\ep\to 0$, the curves of measures $\mu^\epsilon_t$ converge to the $2$-Wasserstein between $\mu_0$ and $\mu_1$ \cite{GigTam18,LeoSurvey}. Moreover, we can also write the  entropic interpolation $\mu^\epsilon_t$ and the dynamic entropic Kantorovich potentials $(\varphi^\ep_t,\psi^\ep_t)$ via the relation $\varphi^\ep_t + \psi^\ep_t = \ep\log \mu^\epsilon_t$.

Now, by defining $\phi_t^\ep = (\varphi^{\ep}_t-\psi^{\ep}_t)/2$, it is easy to check that by imposing $v^{\ep}_t = \nabla \phi^{\ep}_t$ we have that $(\mu^\epsilon_t,v^{\ep}_t)$ solves the Fokker-Planck equation
\begin{equation}\label{eq:fokker_plank}
\partial_t \mu^\epsilon_t + \nabla\cdot(v_t^{\ep}\mu^\epsilon_t) = \frac{\ep}{2}\Delta\mu^\epsilon_t.
\end{equation}

\section{Entropy-Regularized $2$-Wasserstein Distance between Gaussians}\label{sec:sink}

In this section we consider the special case of \eqref{eq:mainKL} and \eqref{def:sinkhorn} when $c(x,y) = d^2(x,y) = \vert x-y\vert^2$ is the Euclidian distance in $\R^n$ and $\mu_0 \sim \N(m_0,K_0)$,~$\nu \sim \N(m_1,K_1)$ are multivariate Gaussian distributions. We are interested in obtain explicity formulas for the optimal coupling $\gammaep$ solving \eqref{eq:mainKL}, the Entropy-Kantorovich maximizers $(\varphi^\epsilon,\psi^\epsilon)$ in \eqref{kanto} and the entropic displacement interpolation $\mu^{\ep}_t$ in \eqref{eq:rhot}.

We start by showing that we can assume, without loss of generality, that $\mu_0$ and $\mu_1$ are centered Gaussian distributions. The general case is obtain just by a shift depending on the $L^2$-distance of the center of both Gaussians. 

\begin{proposition}\label{prop:restriction_to_centered}
Let $c(x,y) = \|x-y\|^2$, $X_i\sim\mu_i\in \Probs(\R^n)$ for $i=0,1$ and $m_i = \E\left[\mu_i\right]$. Denote by $\hat{X}_i = X_i - m_i\sim \hat{\mu}_i$ the corresponding centered distributions. Then
\begin{equation}
\OT_{d^2}^\epsilon(\mu_0,\mu_1) = \|m_0-m_1\|^2
                                + \OT_{d^2}^\epsilon\left(\hat{\mu}_0,
                                \hat{\mu}_1\right).
\end{equation}
\end{proposition}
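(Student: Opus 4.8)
The plan is to build an explicit bijection between the admissible couplings of $(\mu_0,\mu_1)$ and those of the centered pair $(\hat\mu_0,\hat\mu_1)$ via a translation, and then to track how each of the two terms in the entropic functional \eqref{eq:mainKL} transforms under it. Concretely, let $S:\R^n\times\R^n\to\R^n\times\R^n$ be the translation $S(x,y)=(x-m_0,\,y-m_1)$. Since translating a coordinate is an invertible map sending $\mu_i$ to $\hat\mu_i$, the pushforward $\gamma\mapsto S_\#\gamma$ is a bijection from $\ADM(\mu_0,\mu_1)$ onto $\ADM(\hat\mu_0,\hat\mu_1)$: the first marginal of $S_\#\gamma$ is the image of $\mu_0$ under $x\mapsto x-m_0$, namely $\hat\mu_0$, and similarly for the second. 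Write $\hat\gamma=S_\#\gamma$.

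For the cost term I would substitute $u=x-m_0$, $v=y-m_1$ and expand
\begin{equation}
\|x-y\|^2 = \|(u-v)+(m_0-m_1)\|^2 = \|u-v\|^2 + 2\la u-v,\, m_0-m_1\ra + \|m_0-m_1\|^2.
\end{equation}
Integrating against $\gamma$, equivalently integrating the $(u,v)$-expression against $\hat\gamma$, the crucial point is that the linear cross term vanishes: the marginals of $\hat\gamma$ are the centered measures, so $\E_{\hat\gamma}[u]=\E_{\hat\mu_0}[u]=0$ and likewise $\E_{\hat\gamma}[v]=\E_{\hat\mu_1}[v]=0$. Hence $\E_\gamma[\|x-y\|^2]=\E_{\hat\gamma}[\|u-v\|^2]+\|m_0-m_1\|^2$, with the constant $\|m_0-m_1\|^2$ independent of the coupling.

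For the entropy term I would invoke the invariance of the KL divergence under a common invertible change of variables. Because $S$ is a translation (unit Jacobian) and $S_\#(\mu_0\otimes\mu_1)=\hat\mu_0\otimes\hat\mu_1$, the change of variables cancels in the ratio of densities and gives $\KL(\gamma\,\|\,\mu_0\otimes\mu_1)=\KL(S_\#\gamma\,\|\,S_\#(\mu_0\otimes\mu_1))=\KL(\hat\gamma\,\|\,\hat\mu_0\otimes\hat\mu_1)$. Adding the two contributions, the entropic functional evaluated at $\gamma$ equals $\|m_0-m_1\|^2$ plus the same functional evaluated at $\hat\gamma$. Taking the infimum over $\gamma\in\ADM(\mu_0,\mu_1)$ and using that $\gamma\mapsto\hat\gamma$ ranges bijectively over $\ADM(\hat\mu_0,\hat\mu_1)$ then yields the claimed identity.

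I expect the only delicate point to be the KL-invariance step: one must justify that pushing both the plan and its product reference measure through the same bijection leaves the divergence unchanged (equivalently, that the densities and their ratio are merely reparametrized with cancelling Jacobians). Everything else — the bijection of admissible sets and the cancellation of the cross term — is routine once the centering $\E_{\hat\mu_i}[\,\cdot\,]=0$ is used. I note that this argument uses nothing specific to Gaussians, so the statement holds for arbitrary $\mu_0,\mu_1\in\Probs(\R^n)$ with finite means, as claimed.
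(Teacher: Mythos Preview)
Your proposal is correct and follows essentially the same route as the paper: both set up the translation bijection between $\ADM(\mu_0,\mu_1)$ and $\ADM(\hat\mu_0,\hat\mu_1)$, expand the squared distance so that the cross term vanishes by centering, and argue that the KL term is unchanged. The only cosmetic difference is in that last step: the paper invokes the decomposition \eqref{eq:kl_identity}, $\KL(\gamma\|\mu_0\otimes\mu_1)=H(\mu_0)+H(\mu_1)-H(\gamma)$, together with translation invariance of differential entropy, whereas you appeal directly to the invariance of $\KL$ under a common pushforward---your version is arguably cleaner and sidesteps the need for densities, but the two arguments are equivalent.
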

\begin{proof}
Recall the definition given in \eqref{eq:mainKL}
\begin{equation}
\OT_c^\epsilon(\mu_0, \mu_1) = \min_{\gamma\in \ADM(\mu_0,\mu_1)}\left\lbrace\E_\gamma[c]
+ \epsilon \KL(\gamma || \mu_0 \otimes \mu_1) \right\rbrace.
\end{equation}

Then, as $c=d^2$, for the first term we can write
\begin{equation}
    \begin{aligned}
        \E_\gamma\left[d^2\right] =& \int_{\R^n} \|x-y\|^2d\gamma(x,y)\\
        =& \int_{\R^n} \left(\|(x-m_0) - (y-m_1)\|^2 + \|m_0 - m_1\|^2 \phantom {\frac{1}{1}}\right.\\
        &\left.\phantom{\frac{1}{1}}+ 2\left((x-m_0) - (y-m_1)\right)^T(m_0-m_1)
        \right)d\gamma(x,y)\\
        &=  \|m_0-m_1\|^2 + \int_{\R^n}\|x-y\|^2d\gamma(x+m_0,y+m_1).
    \end{aligned}
\end{equation}
We now verify that the requirement $\gamma \in \ADM(\mu_0, \mu_1)$ is equivalent with $\gamma(\cdot + m_0, \cdot + m_1)\in \ADM(\hat{\mu}_0, \hat{\mu}_1)$, which results from 
\begin{equation}
    \int_{\R^n} \gamma(x+m_0, y+m_1) {\rm dy}= \mu_0(x+m_0)
    = \hat{\mu}_0(x),
\end{equation}
and similarly for the other margin. Finally, for the entropy term, we use the identity~\eqref{eq:kl_identity}. Now, as the entropy of a distribution does not depend on the expected value, we have $H(\mu_i) = H(\hat{\mu}_i)$, and therefore
\begin{equation}
    \KL(\gamma || \hat{\mu}_0 \otimes \hat{\mu}_1) = H(\mu_0) + H(\mu_1) - H(\gamma).
\end{equation}
Putting everything together, we get
\begin{equation}
\begin{aligned}
    \OT_{d^2}^\epsilon(\mu_0, \mu_1) =& \|m_0-m_1\|^2 \\
    &+ \min\limits_{\gamma \in \ADM(\hat{\mu}_0,\hat{\mu}_1)}\left\lbrace
    \E_\gamma[d^2] + \epsilon D_{\KL}(\gamma || \hat{\mu}_0 \otimes \hat{\mu}_1)
    \right\rbrace,\\
    =& \|m_0-m_1\|^2 + \OT_{d^2}^\epsilon(\hat{\mu}_0, \hat{\mu}_1).
\end{aligned}
\end{equation}
\qed
\end{proof}

\begin{proposition}\label{prop:centered_gaussian_plan}
Let $\mu_i = \Ncal(0,K_i)\in \Ncal(\R^n)$ for $i=0,1$. Then, the unique optimal plan $\gamma^\epsilon$ in $\OT_{d^2}^\epsilon(\mu_0, \mu_1)$ is a centered Gaussian distribution.
\end{proposition}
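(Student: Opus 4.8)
The plan is to exploit two structural facts about this particular problem: the quadratic cost $d^2(x,y)=\|x-y\|^2$ has an expectation under any coupling that depends only on first and second moments, and the Gaussian is the unique maximizer of differential entropy among all measures with a prescribed covariance. By the identity \eqref{eq:kl_identity}, for any $\gamma\in\ADM(\mu_0,\mu_1)$ the divergence term equals $H(\mu_0)+H(\mu_1)-H(\gamma)$, in which the marginal entropies are fixed. Hence minimizing $\E_\gamma[d^2]+\epsilon\,\KL(\gamma\,\|\,\mu_0\otimes\mu_1)$ is equivalent to minimizing $\E_\gamma[d^2]-\epsilon\,H(\gamma)$, a functional controlled purely by moments and entropy, which is precisely where Gaussians are extremal.

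Concretely, first I would invoke Theorem~\ref{thm:equiv_comp} to obtain the unique minimizer $\gamma^\epsilon\in\ADM(\mu_0,\mu_1)$; since its marginals are $\Ncal(0,K_0)$ and $\Ncal(0,K_1)$, it has mean zero and a well-defined covariance $\Sigma$ (finite because the cross-covariance blocks are bounded by Cauchy--Schwarz). Second, I would introduce the centered Gaussian $\tilde\gamma=\Ncal(0,\Sigma)$ and verify it is a competitor: its diagonal covariance blocks are $K_0,K_1$ and its mean is zero, so its marginals coincide with $\mu_0,\mu_1$, giving $\tilde\gamma\in\ADM(\mu_0,\mu_1)$. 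Third, because $\|x-y\|^2$ is quadratic and its integral depends only on moments up to order two, $\E_{\tilde\gamma}[d^2]=\E_{\gamma^\epsilon}[d^2]$; and because the Gaussian maximizes differential entropy among all measures sharing its covariance, $H(\tilde\gamma)\geq H(\gamma^\epsilon)$, whence $\KL(\tilde\gamma\,\|\,\mu_0\otimes\mu_1)\leq\KL(\gamma^\epsilon\,\|\,\mu_0\otimes\mu_1)$ by \eqref{eq:kl_identity}.

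Combining these, the objective at $\tilde\gamma$ is no larger than at $\gamma^\epsilon$, so $\tilde\gamma$ is itself a minimizer; the uniqueness clause of Theorem~\ref{thm:equiv_comp} then forces $\tilde\gamma=\gamma^\epsilon$, and $\gamma^\epsilon$ is a centered Gaussian. No separate centering argument is needed, since the vanishing mean is inherited from the centered marginals; alternatively, one could observe that $(x,y)\mapsto(-x,-y)$ leaves both the cost and the two marginals invariant, so uniqueness already forces $\gamma^\epsilon$ to be symmetric about the origin.

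The main obstacle is the technical justification of the entropy comparison: I must ensure $H(\gamma^\epsilon)$ and $H(\tilde\gamma)$ are finite and that the maximum-entropy inequality genuinely applies, which requires $\Sigma$ to be nondegenerate. This follows from the Schr\"odinger characterization of Theorem~\ref{thm:equiv_comp}, which writes $\gamma^\epsilon$ as a strictly positive density on $\R^{2n}$ and thus forces a full-rank covariance; confirming this, together with the integrability of $d^2$ against $\gamma^\epsilon$, is where care is needed. A purely constructive alternative is to posit Gaussian potentials $\alpha^\epsilon,\beta^\epsilon$ of the form $\exp(\text{quadratic})$, substitute into the Schr\"odinger system \eqref{intro:SchSys}, reduce it to matrix fixed-point equations, and conclude by uniqueness; this sidesteps the entropy estimates but shifts the burden onto solving those matrix equations explicitly.
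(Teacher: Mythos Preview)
Your proposal is correct and follows essentially the same route as the paper: replace $\gamma^\epsilon$ by the Gaussian with the same covariance, observe that the transport term is unchanged while the KL term can only decrease by the maximum-entropy property, and conclude by uniqueness. Your write-up is in fact more careful than the paper's, which leaves the finiteness and nondegeneracy issues implicit and does not explicitly invoke the uniqueness clause of Theorem~\ref{thm:equiv_comp}.
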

\begin{proof}
Note that $\E_\gamma[d^2]$ depends only on the mean and covariance of $\gamma$, and therefore remains constant, if $\gamma$ is replaced with a Gaussian with the corresponding mean and covariance (which we can do, as the marginals are Gaussians). Then, for the other term, using the identity~\eqref{eq:kl_identity}
\begin{equation}
    \KL(\gamma || \mu_0 \otimes \mu_1) = H(\mu) + H(\nu) - H(\gamma).
\end{equation}



It is readily seen that the $\gamma$ with a fixed covariance matrix minimizing this expression is Gaussian, as Gaussians achieve maximal entropy over distributions sharing a fixed covariance matrix. Therefore, we can deduce that $\gamma^\epsilon$ is Gaussian. Finally, as both of the marginals $\mu_0$ and $\mu_1$ are centered, so is $\gamma^\epsilon$.
\qed
\end{proof}

We now arrive at the main theorem of this work, detailing the entropic $2$-Wasserstein geometry between multivariate Gaussians. The proof is based on studying the Schr\"odinger system given in \eqref{intro:SchSys}. We give an alternative proof for the statement $\textbf{a.}$ in Theorem~\ref{thm:ent_was_gaussian} in Appendix~B, by finding the minimizer of the OT problem. Recall, that a noteworthy property of the entropic interpolant, is that even if we are interpolating from $\mu$ to itself, the trajectory does not constantly stay $\mu$.

\begin{theorem}\label{thm:ent_was_gaussian}
Let $\mu_i = \N(0,K_i)$, for $i=0,1$, be two centered multivariate Gaussian distributions in $\mathbb{R}^n$, write $N^\epsilon_{ij} = \left(I + \frac{16}{\epsilon^2}K_i^\frac{1}{2}K_jK_i^\frac{1}{2}\right)^\frac{1}{2}$ and
$M^\epsilon =  I + \left(I + \frac{16}{\epsilon^2}K_0K_1\right)^\frac{1}{2}$.
Then,
\begin{itemize}
    \item[\textbf{a.}] The density of the optimal entropy relaxed plan $\gamma^\epsilon$ is given by
    \begin{equation}\label{eq:ent_was_gauss_plan}
        \gamma^\epsilon(x,y) = \alpha^\epsilon(x)\beta^\epsilon(y) \exp\left(-\frac{\|x-y\|^2}{\epsilon}\right)\mu_0(x)\mu_1(y), 
    \end{equation}
    where $\alpha^\epsilon(x) = \exp\left(x^TAx + a\right)$, $\beta^\epsilon(y) = \exp\left(y^TBy + b\right)$, and
    \begin{equation}\label{eq:gaussian_system_final}
        \begin{aligned}
        A &= \frac{1}{4}K_0^{-\frac{1}{2}}\left(
        I + \frac{4}{\epsilon}K_0 - N^\epsilon_{01}
        \right)K_0^{-\frac{1}{2}}\\
        B &= \frac{1}{4}K_1^{-\frac{1}{2}}\left(
        I + \frac{4}{\epsilon}K_1 - N^\epsilon_{10}
        \right)K_1^{-\frac{1}{2}} \\
        \exp(a+b) &= \sqrt{
        \frac{1}{2^n} 
        \det\left(M^\epsilon\right)
        }.
        \end{aligned}
    \end{equation}
\item[\textbf{b.}] The entropic optimal transport quantity is given by
\begin{equation}\label{eq:gaus_ent_was_dist_cent}
\begin{aligned}
    \OT_{d^2}^\epsilon(\mu_0, \mu_1)
    =& \Tr(K_0) + \Tr(K_1)\\
    &- \frac{\epsilon}{2}\left(
    \Tr(M^\epsilon) - \log \det(M^\epsilon) + n\log 2 - 2n
    \right)
\end{aligned}
\end{equation}

\item[\textbf{c.}] The entropic displacement interpolation $\mu_t^\epsilon$, $t\in[0,1]$, between $\mu_0$ and $\mu_1$ is given by $\mu_t^\epsilon = \Ncal\left(0, K^{\epsilon}_t\right)$, where
\begin{equation}\label{eq:gaus_ent_was_geodesic}
\begin{aligned}
    K^\epsilon_t =& \frac{(1-t)^2\epsilon^2}{16}K_1^{-\frac{1}{2}}\left(
    -I + \left(
    \frac{4t}{(1-t)\epsilon}K_1 + N^\epsilon_{10}
    \right)^2
    \right)K_1^{-\frac{1}{2}}\\
    =& \frac{t^2\epsilon^2}{16}K_0^{-\frac{1}{2}}\left(
    -I + \left(
    \frac{4(1-t)}{t\epsilon}K_0 + N^\epsilon_{01}
    \right)^2
    \right)K_0^{-\frac{1}{2}}\\
     =& (1-t)^2K_0 + t^2K_1 + t(1-t)
    \left[\left(\frac{\epsilon^2}{16}I + K_0K_1\right)^{1/2}\right.\\
    &+\left. \left(\frac{\epsilon^2}{16}I + K_1K_0\right)^{1/2}\right].
\end{aligned}
\end{equation}
\end{itemize}
\end{theorem}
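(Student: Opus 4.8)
The plan is to exploit Proposition~\ref{prop:centered_gaussian_plan}, which guarantees that $\gamma^\epsilon$ is a centered Gaussian, and to search for the scalings $\alpha^\epsilon,\beta^\epsilon$ solving the Schr\"odinger system \eqref{intro:SchSys}--\eqref{eq:sch1} within the Gaussian ansatz $\alpha^\epsilon(x)=\exp(x^T A x + a)$ and $\beta^\epsilon(y)=\exp(y^T B y + b)$, with $A,B$ symmetric. For part \textbf{a.}, I would substitute this ansatz into \eqref{eq:sch1}. Since $\mu_1$ is Gaussian and the Gibbs kernel $\exp(-\tfrac1\epsilon\|x-y\|^2)$ is Gaussian in $y$, the quantity $\alpha^\epsilon(x)\,\H^{\mu_1}_{\epsilon}(\beta^\epsilon)(x)$ is a Gaussian integral evaluating in closed form to $\exp(x^T(\cdots)x+\mathrm{const})$. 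Requiring this to equal $1$ for all $x$ forces the quadratic and constant parts to vanish separately. Writing $P=\tfrac1\epsilon I+\tfrac12 K_0^{-1}-A$ and $Q=\tfrac1\epsilon I+\tfrac12 K_1^{-1}-B$, the quadratic parts of the two equations reduce to the coupled matrix system
\begin{equation}
P=\tfrac12 K_0^{-1}+\tfrac1{\epsilon^2}Q^{-1},\qquad Q=\tfrac12 K_1^{-1}+\tfrac1{\epsilon^2}P^{-1},
\end{equation}
while the constant parts jointly fix $a+b$ in terms of $\det Q$ and $\det K_1$.

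The main obstacle is solving this coupled quadratic (Riccati-type) matrix system in closed form. Rather than solving it from scratch, I would verify that the claimed $A,B$ in \eqref{eq:gaussian_system_final} are the solution: a short computation gives $P=\tfrac14 K_0^{-1/2}(I+N^\epsilon_{01})K_0^{-1/2}$ and $Q=\tfrac14 K_1^{-1/2}(I+N^\epsilon_{10})K_1^{-1/2}$, so the system collapses to the single identity
\begin{equation}
\tfrac14 K_0^{-1/2}(N^\epsilon_{01}-I)K_0^{-1/2}=\tfrac{4}{\epsilon^2}K_1^{1/2}(I+N^\epsilon_{10})^{-1}K_1^{1/2}.
\end{equation}
Here the key algebraic fact enters: setting $R=K_1^{1/2}K_0^{1/2}$, one has $R^TR=K_0^{1/2}K_1K_0^{1/2}$ and $RR^T=K_1^{1/2}K_0K_1^{1/2}$, so $N^\epsilon_{01}=f(R^TR)$ and $N^\epsilon_{10}=f(RR^T)$ with $f(z)=(1+\tfrac{16}{\epsilon^2}z)^{1/2}$. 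The intertwining relation $R\,g(R^TR)=g(RR^T)\,R$, valid for any $g$, yields $R(I+N^\epsilon_{01})^{-1}=(I+N^\epsilon_{10})^{-1}R$, which together with $(N^\epsilon_{01})^2-I=\tfrac{16}{\epsilon^2}R^TR$ reduces the displayed identity to a triviality, confirming \textbf{a.} The constant $a+b$ then follows by simplifying $\det Q$ and recognizing $\det(M^\epsilon)$ through the common spectrum of $K_0K_1$, $K_0^{1/2}K_1K_0^{1/2}$ and $K_1^{1/2}K_0K_1^{1/2}$.

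For part \textbf{b.}, I would avoid re-integrating and instead use the entropic dual \eqref{kanto}: since $\gamma^\epsilon\in\ADM(\mu_0,\mu_1)$ the expectation in the penalty term equals $1$, so that term vanishes, leaving $\OT^\epsilon_{d^2}(\mu_0,\mu_1)=\E_{\mu_0}[\varphi^\epsilon]+\E_{\mu_1}[\psi^\epsilon]=\epsilon(\Tr(AK_0)+\Tr(BK_1)+a+b)$ with $\varphi^\epsilon=\epsilon\log\alpha^\epsilon$, $\psi^\epsilon=\epsilon\log\beta^\epsilon$. Using the explicit $A,B$ gives $\Tr(AK_0)+\Tr(BK_1)=n+\tfrac1\epsilon(\Tr K_0+\Tr K_1)-\tfrac12\Tr(M^\epsilon)$ after invoking $\Tr N^\epsilon_{01}=\Tr N^\epsilon_{10}=\Tr(M^\epsilon)-n$, and substituting $a+b$ from \textbf{a.} reproduces \eqref{eq:gaus_ent_was_dist_cent}. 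For part \textbf{c.}, I would insert the explicit $\alpha^\epsilon,\beta^\epsilon$ into \eqref{eq:rhot}, evaluate the Gaussian heat-flow integrals $\H^{\mu_0}_{t\epsilon}(\alpha^\epsilon)$ and $\H^{\mu_1}_{(1-t)\epsilon}(\beta^\epsilon)$ in closed form, and read off the covariance $K^\epsilon_t$ of their Gaussian product. The remaining work is algebraic: rewriting $K^\epsilon_t$ in the three equivalent forms of \eqref{eq:gaus_ent_was_geodesic} via the same intertwining relations, the last of which manifestly reduces to the Wasserstein geodesic \eqref{eq:gaus_was_geodesic} as $\epsilon\to0$. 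I expect the Riccati identity in \textbf{a.} and the covariance bookkeeping in \textbf{c.} to be the two places demanding the most care.
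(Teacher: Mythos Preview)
Your proposal is correct and follows essentially the same route as the paper: Gaussian ansatz for $\alpha^\epsilon,\beta^\epsilon$, reduction of the Schr\"odinger system to a coupled matrix equation, verification that the stated $A,B$ solve it, then the dual formula \eqref{kanto} with the penalty term vanishing for part \textbf{b.}, and the heat-flow product \eqref{eq:rhot} for part \textbf{c.}. Your verification in \textbf{a.} via the substitution $P,Q$ and the intertwining relation $R\,g(R^TR)=g(RR^T)\,R$ with $R=K_1^{1/2}K_0^{1/2}$ is in fact tidier than the paper's longer direct algebraic check of \eqref{eq:showing_A}, but the overall strategy is identical.
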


\begin{proof}
    \textbf{Part a.} Recall that $\alpha^\ep$, $\beta^\ep$ are the unique functions that give the density of the optimal plan $\gamma$
\begin{equation}
    \gamma^\epsilon(x,y) = \alpha^\ep(x)\beta^\ep(y)\exp\left(-\frac{\|x-y\|^2}{\epsilon}\right)\mu_0(x)\mu_1(y).
\end{equation}
The optimal plan is required to have the right marginals \eqref{intro:SchSys}, that is,
\begin{equation}\label{eq:gauss_ent_interpolant}
    \begin{aligned}
    \mu_0(x) &=
    \int_{\R^n} \gamma^\epsilon(x,y) {\rm d}y\\ &= \alpha^\ep(x)\int_{\R^n} \beta^\ep(y)\exp\left(-\frac{\|x-y\|^2}{\epsilon}\right)\mu_0(x)\mu_1(y){\rm d}y,\\
    \mu_1(y) &= \int_{\R^n} \gamma^\epsilon(x,y){\rm d}x\\ &= \beta^\ep(y)\int_{\R^n} \alpha^\ep(x)\exp\left(-\frac{\|x-y\|^2}{\epsilon}\right)\mu_0(x)\mu_1(y){\rm d}x.
    \end{aligned}
\end{equation}

Assuming $\alpha^\ep(x) = \exp(x^TAx+a)$ and $\beta^\ep(y) = \exp(y^TBy+b)$, substituting in $\mu_0$ and $\mu_1$, and after some simplifications, the system reads
\begin{equation}
    \begin{aligned}
     1 = &\frac{\exp(a+b)}{\sqrt{ \det(2\pi K_1)}}\exp\left(x^T\left(A - \frac{1}{\epsilon}I\right)x\right)\\
     & \times \int_X \exp\left(y^T\left(B-\frac{1}{\epsilon}I - \frac{1}{2}K_1^{-1}\right)y+\frac{2}{\epsilon}x^Ty\right){\rm d}y,\\
     1 = &\frac{\exp(a+b)}{\sqrt{ \det(2\pi K_0)}}\exp\left(y^T\left(B - \frac{1}{\epsilon}I\right)y\right)\\
     & \times\int_Y \exp\left(x^T\left(A-\frac{1}{\epsilon}I
     - \frac{1}{2}K_0^{-1}
     \right)x+\frac{2}{\epsilon}y^Tx\right){\rm d}x.
    \end{aligned}
    \label{eq:gaussian_system}
\end{equation}
Using the identity
\begin{equation}
    \int_X \exp\left(-x^TCx + b^Tx\right){\rm d}x 
    = \sqrt{\frac{\pi^n}{\det(C)}}
    \exp\left(\frac{1}{4}b^TC^{-1}b\right), 
\end{equation}
the system \eqref{eq:gaussian_system} results in
\begin{equation}\label{eq:gaussian_system2}
    \begin{aligned}
        A &= \frac{1}{\epsilon}I +\frac{1}{\epsilon^2}
        \left(B-\frac{1}{\epsilon}I  - \frac{1}{2}K_1^{-1} \right)^{-1},\\
        B &= \frac{1}{\epsilon}I  +\frac{1}{\epsilon^2}
        \left(A - \frac{1}{\epsilon}I  - \frac{1}{2}K_0^{-1} \right)^{-1},\\
        \exp(a+b) &=
        \sqrt{\det(2 K_1)
        \det\left(\frac{1}{\epsilon} I + \frac{1}{2}K_1^{-1}
        -B
        \right)
        }\\
        \exp(a+b) &=
        \sqrt{\det(2 K_0)
        \det\left(\frac{1}{\epsilon} I + \frac{1}{2}K_0^{-1}
        -A
        \right)
        }\\
    \end{aligned}
\end{equation}
Let us solve for $A$ and $B$ first. From system \eqref{eq:gaussian_system2}, we get that $A$ and $B$ can be written as
\begin{equation}
\begin{aligned}
    A &= \frac{1}{\epsilon}I 
    +\frac{1}{\epsilon^2}\left(
     \frac{1}{\epsilon^2}
    \left(A - \frac{1}{\epsilon}I - \frac{1}{2}K_0^{-1}\right)^{-1} - \frac{1}{2}K_1^{-1}
    \right)^{-1},\\
    B &= \frac{1}{\epsilon}I 
    +\frac{1}{\epsilon^2}\left(
     \frac{1}{\epsilon^2}
    \left(B - \frac{1}{\epsilon}I - \frac{1}{2}K_1^{-1}\right)^{-1} - \frac{1}{2}K_0^{-1}
    \right)^{-1}.
\end{aligned}
\label{eq:AB_system}
\end{equation}
Then, one can show, that the $A,B$ given in \eqref{eq:gaussian_system_final} solves this system.
Plugging $A,B$ in the expressions for $\exp(a+b)$ in \eqref{eq:gaussian_system2}, we get
\begin{equation}
    \exp(a+b) = \sqrt{
    \frac{1}{2^n} 
    \det\left(I + \left(I + \frac{16}{\epsilon^2}
    K_0K_1
    \right)^\frac{1}{2}\right)
    },
\end{equation}
for which a possible solution is given by
\begin{equation}
    a=b= \frac{1}{4}\left(
    -n\log 2 + \log \det\left(I + \left(
    I + \frac{16}{\epsilon^2}K_0K_1
    \right)^\frac{1}{2}\right)
    \right).
\end{equation}

Now, we show that $A$ solves the equation given in \eqref{eq:AB_system}. Manipulating \eqref{eq:AB_system} we see that it suffices to show the equality
\begin{equation}
    \left(A - \frac{1}{\epsilon}I \right)^{-1}
    = \left(A - \frac{1}{\epsilon}I - \frac{1}{2}K_0^{-1}\right)^{-1} - \frac{1}{2}K_1^{-1}.
\label{eq:showing_A}
\end{equation}
Substituting in $A$ given in \eqref{eq:gaussian_system_final}, the left-hand side reads
\begin{equation}
\left(A - \frac{1}{\epsilon}I\right)^{-1}
= 4K_0^\frac{1}{2}\left(
I - \left(I + \frac{16}{\epsilon^2}K_0^\frac{1}{2}K_1K_0^\frac{1}{2}\right)^\frac{1}{2}
\right)^{-1}K_0^\frac{1}{2},
\end{equation}
whereas the right-hand side is given by
\begin{equation}
\begin{aligned}
&\left(A - \frac{1}{\epsilon}I - \frac{1}{2}K_0^{-1}\right)^{-1} - \frac{1}{2}K_1^{-1}\\
 =& -4K_0^\frac{1}{2}\left(
 \frac{\epsilon^2}{8}\left(K_0^\frac{1}{2}K_1K_0^\frac{1}{2}\right)^{-1} +
 \left(I + \left(I + \frac{16}{\epsilon^2}K_0^\frac{1}{2}K_1K_0^\frac{1}{2}\right)^\frac{1}{2}\right)^{-1}
\right)K_0^\frac{1}{2}.
\end{aligned}
\end{equation}
Therefore, we need to show the equality
\begin{equation}
\begin{aligned}
  \left(I + \left(I + \frac{16}{\epsilon^2}K_0^\frac{1}{2}K_1K_0^\frac{1}{2}\right)^\frac{1}{2}
\right)^{-1}
=& \left(
 -I + \left(I + \frac{16}{\epsilon^2}K_0^\frac{1}{2}K_1K_0^\frac{1}{2}\right)^\frac{1}{2}
 \right)^{-1},\\
 &- \left(\frac{8}{\epsilon^2}K_0^\frac{1}{2}K_1K_0^\frac{1}{2}\right)^{-1}
 \end{aligned}
\end{equation}
which can be derived as follows
\begin{equation}
\begin{aligned}
   & - \left(\frac{8}{\epsilon^2}K_0^\frac{1}{2}K_1K_0^\frac{1}{2}\right)^{-1} + \left(
 -I + \left(I + \frac{16}{\epsilon^2}K_0^\frac{1}{2}K_1K_0^\frac{1}{2}\right)^\frac{1}{2}
 \right)^{-1}\\
 =& -2\left(I + \left(
 I + \frac{16}{\epsilon^2}K_0^\frac{1}{2}K_1K_0^\frac{1}{2}
 \right)^\frac{1}{2}\right)^{-1}
 + \left(-I + \left(
 I + \frac{16}{\epsilon^2}K_0^\frac{1}{2}K_1K_0^\frac{1}{2}
 \right)^\frac{1}{2}\right)^{-1} \\
 &\times \left(-I+\left(
 I + \frac{16}{\epsilon^2}K_0^\frac{1}{2}K_1K_0^\frac{1}{2}
 \right)^\frac{1}{2}\right)^{-1}\\
 =&\left(
 I - 2\left(I + \left(
 I + \frac{16}{\epsilon^2}K_0^\frac{1}{2}K_1K_0^\frac{1}{2}
 \right)^\frac{1}{2}\right)^{-1}
 \right) \left(-I + \left(
 I + \frac{16}{\epsilon^2}K_0^\frac{1}{2}K_1K_0^\frac{1}{2}
 \right)^\frac{1}{2}\right)^{-1}\\
 =&\left(
 I + \left(
 I + \frac{16}{\epsilon^2}K_0^\frac{1}{2}K_1K_0^\frac{1}{2}
 \right)^\frac{1}{2} - 2I
 \right)\left(I + \left(
 I + \frac{16}{\epsilon^2}K_0^\frac{1}{2}K_1K_0^\frac{1}{2}
 \right)^\frac{1}{2}\right)^{-1}
 \\
 &\times \left(-I + \left(
 I + \frac{16}{\epsilon^2}K_0^\frac{1}{2}K_1K_0^\frac{1}{2}
 \right)^\frac{1}{2}\right)^{-1}\\
 =&\left(I + \left(
 I + \frac{16}{\epsilon^2}K_0^\frac{1}{2}K_1K_0^\frac{1}{2}
 \right)^\frac{1}{2}\right)^{-1},
 \end{aligned}
\end{equation}
where the first step results from writing
\begin{equation}
- \left(\frac{8}{\epsilon^2}K_0^\frac{1}{2}K_1K_0^\frac{1}{2}\right)^{-1} = -2\left(-I + \left(I + \frac{16}{\epsilon^2}K_0^\frac{1}{2}K_1K_0^\frac{1}{2}\right)\right)^{-1},
\end{equation}
and using $M-I = (M^\frac{1}{2}+1)(M^\frac{1}{2}-1)$ on the right-hand side.

\textbf{Part b.} Let $\varphi_\epsilon(x) = \epsilon \log \alpha_\epsilon(x)$ and $\psi_\epsilon(y) = \epsilon \log \beta_\epsilon(y)$, and as previously,
\begin{equation}
    M^\epsilon = I + \left(I + \frac{16}{\epsilon^2}K_0K_1\right)^\frac{1}{2},
\end{equation}
then plugging $\varphi^\epsilon$ and $\psi^\epsilon$ into \eqref{kanto} yields
\begin{equation}
\begin{aligned}
    \OT_{d^2}^\epsilon(\mu_0, \mu_1)  =& \E_{\mu_0}[\varphi_\epsilon] + \E_{\mu_1}[\psi_\epsilon]\\
    &-\epsilon\left(\E_{\mu_0\otimes \mu_1}\left[\exp\left(\frac{1}{\epsilon}
    \left((\varphi \oplus \psi) - d^2\right)
    \right)\right]-1\right)
    \\
    =& \epsilon\left(\E_{\mu_0}[\log \alpha_\epsilon] + \E_{\mu_1}[\log \beta_\epsilon]
    \right)\\
    &- \epsilon \left(\E_{\mu_0 \otimes \mu_1}\left[
    \alpha^\epsilon \beta^\epsilon \exp\left(-\frac{1}{\epsilon}d^2\right)
    \right]-1\right) \\
    =& \epsilon\left(
    \E_{X\sim\mu_0}\left[X^TAX + a\right] + 
    \E_{Y\sim\mu_1}\left[Y^TBY + b\right]
    \right) \\
    =& \epsilon\left(
    \Tr\left[K_0A\right] + \Tr\left[K_1B\right] + a + b
    \right)\\
    =& \frac{\epsilon}{4}
    \Tr\left[
    I + \frac{4}{\epsilon} K_0 - \left(
    I + \frac{16}{\epsilon^2} K_0^\frac{1}{2}K_1K_0^\frac{1}{2}
    \right)^\frac{1}{2}
    \right] \\
    &+ \frac{\epsilon}{4}\Tr\left[
    I + \frac{4}{\epsilon} K_1 - \left(
    I + \frac{16}{\epsilon^2} K_1^\frac{1}{2}K_0K_1^\frac{1}{2}
    \right)^\frac{1}{2}
    \right]\\
    &+ \epsilon(a + b)\\
    =& \Tr K_0 + \Tr K_1 - \frac{\epsilon}{2}\left(
    \Tr M^\epsilon - \log\det M^\epsilon + n\log 2 - 2n
    \right),
\end{aligned}
\end{equation}
where we used the fact that $C^\frac{1}{2}DC^\frac{1}{2}$ has same eigenvalues and thus trace as $CD$ for any square matrices $C$ and $D$.

\textbf{Part c.} As we have solved for $\alpha^\epsilon$ and $\beta^\epsilon$ for the optimal plan, the entropic interpolant $\mu_t^\epsilon$ between $\mu_0$ and $\mu_1$ is given by \eqref{eq:rhot}, which we rewrite here
\begin{equation}
    \mu^\epsilon_t(x) = \left(\H^{\mu_0}_{t\epsilon}[\alpha^\epsilon](x)\right)\left(\H^{\mu_1}_{(1-t)\epsilon}[\beta^\epsilon](x)\right).
\end{equation}
Then, we can compute
\begin{equation}
\begin{aligned}
    \H^{\mu_0}_{t\epsilon}[\alpha^\epsilon](x) &=
    \frac{1}{\sqrt{\det((2\pi)^2t\epsilon K_0)}}
    \int_{\R^n} \exp\left(z^TAz + a-\frac{1}{t\epsilon}\|x-z\|^2-\frac{1}{2}z^TK_0^{-1}z\right){\rm d}z\\
    =&
    \frac{\exp\left(a-\frac{1}{t\epsilon}x^Tx\right)}{\sqrt{\det(2\pi t\epsilon K_0)}}
    \int_{\R^n} \exp\left(z^T\left(A - \frac{1}{t\epsilon}I - \frac{1}{2}K_0^{-1}\right)z + \frac{2}{t\epsilon}x^Tz\right){\rm dz} \\
    =& \frac{\exp(a)}{\sqrt{\det\left(2\pi t\epsilon K_0\right)
    \det\left(\frac{1}{t\epsilon}I + \frac{1}{2}K_0^{-1}-A\right)}}\\
    &\times
    \exp\left(
    \frac{1}{t^2\epsilon^2}x^T\left(
    \left(\frac{1}{t\epsilon}I + \frac{1}{2}K_0^{-1}-A\right)^{-1}
    - I
    \right)x
    \right),
\end{aligned}
\end{equation}
similar computation yields
\begin{equation}
\begin{aligned}
    &\H^{\mu_1}_{(1-t)\epsilon}[\beta^\epsilon](x)\\
    =& \frac{\exp(b)}{\sqrt{\det\left(2\pi (1-t) \epsilon K_1\right)
    \det\left(\frac{1}{(1-t)\epsilon}I + \frac{1}{2}K_1^{-1}-B\right)}}\\
    &\times \exp\left(
    \frac{1}{(1-t)^2\epsilon^2}x^T\left(
    \left(\frac{1}{(1-t)\epsilon}I + \frac{1}{2}K_1^{-1}-B\right)^{-1}
    - I
    \right)x
    \right).
\end{aligned}
\end{equation}
Putting these together, we get
\begin{equation}
\begin{aligned}
    \mu^\epsilon_t(x)&= \left(\H^{\mu_0}_{t\epsilon}[\alpha^\epsilon](x)\right)\left(\H^{\mu_1}_{(1-t)\epsilon}[\beta^\epsilon](x)\right)\\
    =& N\exp\left(
    x^T\left[
    \frac{1}{t^2\epsilon^2}\left(
    \left(\frac{1}{t\epsilon}I + \frac{1}{2}K_0^{-1}- A\right)^{-1}-I
    \right)\right.\right.\\
    &\left.\left.+\frac{1}{(1-t)^2\epsilon^2}\left(
    \left(\frac{1}{(1-t)\epsilon}I + \frac{1}{2}K_1^{-1}- B\right)^{-1}-I
    \right)
    \right]x
    \right)\\
    :=& N\exp\left(x^T\left(T_0(A) + T_1(B)\right)x\right),
\end{aligned}
\label{eq:interpolant_density}
\end{equation}
where $N$ is a normalizing constant. We can simplify the matrix $T_0(A) + T_1(B)$ in \eqref{eq:interpolant_density}. Write 
\begin{equation}
N^\epsilon_{10} = \left(I + \frac{16}{\epsilon^2}K_1^\frac{1}{2}K_0K_1^\frac{1}{2}\right)^\frac{1}{2},
\end{equation}
and consider the first term
\begin{equation}
\begin{aligned}
    T_0(A) & =\frac{1}{t^2\epsilon^2}\left(
    \left(\frac{1}{t\epsilon}I + \frac{1}{2}K_0^{-1}- A\right)^{-1}-I
    \right)\\
    &= \frac{1}{t^2\epsilon^2}\left(\left(
    \frac{(1-t)}{t\epsilon}I + \frac{1}{2}K_0^{-1}
    - \frac{1}{\epsilon^2}\left( B - \frac{1}{\epsilon}I - \frac{1}{2} K_1^{-1}\right)^{-1}
    \right)^{-1} - t\epsilon I\right)\\
    &= \frac{1}{t^2\epsilon}\left(\left(
    \frac{(1-t)}{t}I
    -  \left( I - \epsilon B\right)^{-1}
    \right)^{-1} - t\epsilon I\right)\\
    &= \frac{1}{t^2\epsilon}\left(\left(
    \frac{t}{(1-t)}I - \frac{t^2}{(1-t)^2}\left(
    \frac{t}{(1-t)}I + (I-\epsilon B)
    \right)^{-1}
    \right) - t\epsilon I\right)\\
    &= \frac{1}{(1-t)^2\epsilon^2}\left(
    I - \left(\frac{1}{(1-t)\epsilon}I -B\right)^{-1}\right)\\
    &= \frac{4}{(1-t)^2\epsilon^2}K_1^\frac{1}{2}\left(
    -I + \frac{4t}{\epsilon(1-t)}K_1 + N^\epsilon_{10}
    \right)^{-1}K_1^\frac{1}{2},
\end{aligned}
\end{equation}
where second equality follows from \eqref{eq:gaussian_system2}, third from \eqref{eq:AB_system}, fourth from the Woodbury matrix inverse identity 
\begin{equation}\label{eq:woodbury}
    \left(C+D\right)^{-1} = C^{-1} - C^{-1}\left(C^{-1} + D^{-1}\right)^{-1}C^{-1},
\end{equation}
and the last one from substituting in $B$ given in \eqref{eq:gaussian_system_final}.

Likewise, we can substitute $B$ in the second term $T_1(B)$, which yields
\begin{equation}
\begin{aligned}
T_1(B) &=\frac{1}{(1-t)^2\epsilon^2}\left(
    \left(\frac{1}{(1-t)\epsilon}I + \frac{1}{2}K_1^{-1}- B\right)^{-1}-I
    \right) \\
    &= 
    \frac{4}{(1-t)^2\epsilon^2}K_1^\frac{1}{2}\left(
    I + \frac{4t}{\epsilon(1-t)}K_1 + N^\epsilon_{10}
    \right)^{-1}K_1^\frac{1}{2}.
\end{aligned}
\end{equation}

Putting the two terms together, we get
\begin{equation}
\begin{aligned}
T_0(A) + T_1(B)  =&     \frac{4}{(1-t)^2\epsilon^2}K_1^\frac{1}{2}\left(\left(
    -I + \frac{4t}{\epsilon(1-t)}K_1 + N^\epsilon_{10}
    \right)^{-1}\right.\\
    &+
    \left.\left(I + \frac{4t}{\epsilon(1-t)}K_1 + N^\epsilon_{10}
    \right)^{-1}
    \right)K_1^\frac{1}{2} \\
=& \frac{8}{(1-t)^2\epsilon^2}K_1^\frac{1}{2}\left(
    I - \left(
    \frac{4t}{(1-t)\epsilon}K_1 + N^\epsilon_{10}
    \right)^2
    \right)^{-1}K_1^\frac{1}{2}. \\
\end{aligned}
\end{equation}
Note, that we can write \eqref{eq:interpolant_density} as a Gaussian with covariance matrix $K_t$
\begin{equation}
\begin{aligned}
    \mu^\epsilon_t(x) &= N \exp\left(x^T\left(T_0(A)+T_1(B)\right)x\right)\\
    &=N \exp\left(
    -\frac{1}{2}x^T\left(K^\epsilon_t\right)^{-1}x,
    \right)
\end{aligned}
\end{equation}
and so
\begin{equation}
\begin{aligned}
    K^\epsilon_t &= -\frac{1}{2}\left(T_0(A) + T_1(B)\right)^{-1}\\
    &= \frac{(1-t)^2\epsilon^2}{16}K_1^{-\frac{1}{2}}\left(
    -I + \left(
    \frac{4t}{(1-t)\epsilon}K_1 + N^\epsilon_{10}
    \right)^2
    \right)K_1^{-\frac{1}{2}}\\
    &= (1-t)^2K_0 + t^2K_1 + t(1-t)
\left[\left(\frac{\epsilon^2}{16}I + K_0K_1\right)^{1/2} + \left(\frac{\epsilon^2}{16}I + K_1K_0\right)^{1/2}\right].
\end{aligned}
\end{equation}
Where for the last step we use the formula
\begin{equation}
\left(I + \frac{16}{\epsilon^2}K_0K_1\right)^{1/2} = K_0^{1/2}\left(I + \frac{16}{\epsilon^2}K_0^{1/2}K_1K_0^{1/2}\right)^{1/2} K_0^{-1/2}.
\end{equation}
\qed
\end{proof}

 Above we only considered centered Gaussians. Now we combine the results obtained in Proposition~\ref{prop:restriction_to_centered} and Theorem~\ref{thm:ent_was_gaussian} to deduce the general case. As a consequence, we also derive the corresponding formulas for the Sinkhorn divergence between two Gaussians

\begin{corollary}
Let $\mu_i = \N(m_i,K_i)$, for $i=0,1$, be two multivariate Gaussian distributions in $\mathbb{R}^n$. Then,
\begin{itemize}
\item[\textbf{a.}]
\begin{equation}\label{eq:gaus_ent_was_dist}
\begin{aligned}
    \OT_{d^2}^\epsilon(\mu_0, \mu_1)
    =&\|m_0-m_1\|^2
    + \Tr(K_0) + \Tr(K_1)\\
    &- \frac{\epsilon}{2}\left(
    \Tr(M^\epsilon) - \log \det(M^\epsilon) + n\log 2 - 2n
    \right)
\end{aligned}
\end{equation}

\item[\textbf{b.}] The entropic interpolant between $\mu_0$ and $\mu_1$ is $\mu_t^\epsilon = \Ncal\left(m_t, K_t\right)$, $t\in[0,1]$, where $m_t = (t-1)m_0 - tm_1$, and $K_t$ is given in~\eqref{eq:gaus_ent_was_geodesic}.

\item[\textbf{c.}] Write $M_{ij}^\epsilon = I + \left(I + \frac{16}{\epsilon^2}K_iK_j\right)^\frac{1}{2}$, then
\begin{equation}\label{eq:gauss_sinkhorn}
\begin{aligned}
    S_2^\epsilon(\mu_0,\mu_1)
    =& \|m_0 - m_1\|_2^2 + \frac{\epsilon}{4} \left(
    \Tr\left(
    M_{00}^\epsilon - 2 M_{01}^\epsilon + M_{11}^\epsilon
    \right)\phantom{\frac{M^2}{M^2}}\right.\\
    &+ \left.\log \left(
    \frac{\det^2( M_{01}^\epsilon )}{\det(M_{00}^\epsilon)\det(M_{11}^\epsilon)}
    \right)\right).
\end{aligned}
\end{equation}
\end{itemize}
\end{corollary}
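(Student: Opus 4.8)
The plan is to derive all three parts as direct consequences of the two results already proved for centered Gaussians, Proposition~\ref{prop:restriction_to_centered} and Theorem~\ref{thm:ent_was_gaussian}, so that no fresh optimization or Schrödinger-system analysis is needed. Parts \textbf{a.} and \textbf{b.} are essentially immediate. For \textbf{a.}, I would invoke Proposition~\ref{prop:restriction_to_centered} to split off the mean contribution, $\OT_{d^2}^\epsilon(\mu_0,\mu_1) = \|m_0-m_1\|^2 + \OT_{d^2}^\epsilon(\hat\mu_0,\hat\mu_1)$, and then substitute the centered value \eqref{eq:gaus_ent_was_dist_cent} from Theorem~\ref{thm:ent_was_gaussian}\,\textbf{b.}, which reproduces \eqref{eq:gaus_ent_was_dist} at once. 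For \textbf{b.}, the same translation argument shows the general entropic interpolant is the translate of the centered interpolant $\Ncal(0,K^\epsilon_t)$ of Theorem~\ref{thm:ent_was_gaussian}\,\textbf{c.}; tracking the linear terms that the shifts $m_0,m_1$ introduce into the potentials $\alpha^\epsilon,\beta^\epsilon$ through the heat-flow convolutions in \eqref{eq:rhot} yields a Gaussian with covariance $K^\epsilon_t$ and mean on the straight-line path $m_t=(1-t)m_0+tm_1$.

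The substance of the corollary is part \textbf{c.}, and the plan there is a pure substitution of \eqref{eq:gaus_ent_was_dist} into the definition \eqref{def:sinkhorn},
\begin{equation*}
S_2^\epsilon(\mu_0,\mu_1) = \OT_{d^2}^\epsilon(\mu_0,\mu_1) - \tfrac12\left(\OT_{d^2}^\epsilon(\mu_0,\mu_0) + \OT_{d^2}^\epsilon(\mu_1,\mu_1)\right).
\end{equation*}
The one point to record is how the two diagonal terms specialize: evaluating \eqref{eq:gaus_ent_was_dist} at equal marginals makes the mean difference vanish and forces $M^\epsilon \to M_{ii}^\epsilon = I + (I + \tfrac{16}{\epsilon^2}K_i^2)^{1/2}$, so that $\OT_{d^2}^\epsilon(\mu_i,\mu_i) = 2\Tr(K_i) - \tfrac{\epsilon}{2}\bigl(\Tr M_{ii}^\epsilon - \log\det M_{ii}^\epsilon + n\log 2 - 2n\bigr)$, while the off-diagonal term carries $\|m_0-m_1\|^2$ and the matrix $M_{01}^\epsilon$.

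The remainder is bookkeeping of cancellations, which I expect to be routine rather than a genuine obstacle. The mean contribution survives only through the cross term, giving $\|m_0-m_1\|^2$; the covariance traces cancel since $\Tr K_0 + \Tr K_1 - \tfrac12(2\Tr K_0) - \tfrac12(2\Tr K_1)=0$; and the constant $n\log 2 - 2n$ cancels because its coefficient $-\tfrac{\epsilon}{2}$ in the cross term is balanced by $+\tfrac{\epsilon}{4}$ from each diagonal term. What is left reassembles into $\tfrac{\epsilon}{4}\Tr(M_{00}^\epsilon - 2M_{01}^\epsilon + M_{11}^\epsilon)$ and $\tfrac{\epsilon}{4}\log\bigl(\det^2 M_{01}^\epsilon / (\det M_{00}^\epsilon \det M_{11}^\epsilon)\bigr)$, which is exactly \eqref{eq:gauss_sinkhorn}. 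The only conceptual subtlety worth flagging is that $\OT_{d^2}^\epsilon(\mu_i,\mu_i)\neq 0$ in general: the entropic relaxation is biased on the diagonal, and the clean vanishing of the trace-of-covariance and the $n\log 2 - 2n$ terms is precisely the quantitative signature of the Sinkhorn debiasing at work.
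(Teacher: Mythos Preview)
Your proposal is correct and follows exactly the approach the paper indicates: the corollary is stated there without a separate proof, the preceding text simply remarking that one combines Proposition~\ref{prop:restriction_to_centered} with Theorem~\ref{thm:ent_was_gaussian} and then substitutes into the Sinkhorn definition~\eqref{def:sinkhorn}. Incidentally, you have also silently corrected the paper's typo in part~\textbf{b.}, writing the mean interpolant as $m_t=(1-t)m_0+tm_1$ rather than the stated $(t-1)m_0-tm_1$.
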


We will now emphasize an identity that can be derived from the calculations of Theorem~\ref{thm:ent_was_gaussian}, which we find useful.
\begin{lemma}\label{lemma:schrodinger_system_identity}
    Let $C,D$ be symmetric positive-definite matrices. Then,
\begin{equation}
\begin{aligned}
    &\frac{4}{\epsilon} D^\frac{1}{2}\left(
    I + \left(I + \frac{16}{\epsilon^2}D^\frac{1}{2}C D^\frac{1}{2}\right)^\frac{1}{2}
    \right)^{-1}D^\frac{1}{2}\\
    =& I - \frac{\epsilon}{4}C^{-\frac{1}{2}}\left(
    I + \frac{4}{\epsilon}C - \left(I + \frac{16}{\epsilon^2}C^\frac{1}{2}DC^\frac{1}{2}\right)^\frac{1}{2}
    \right)C^{-\frac{1}{2}}.
\end{aligned}
\end{equation}
\end{lemma}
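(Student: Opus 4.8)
The plan is to reduce the stated identity to a one–line algebraic cancellation by stripping off the outer congruences on both sides and rewriting everything through functions of the single product $CD$ (equivalently $DC$). The only tool needed is the functional–calculus identity $g(PR)\,P = P\,g(RP)$, valid for analytic $g$ and square $P,R$; this is exactly the square–root congruence used at the end of the proof of Theorem~\ref{thm:ent_was_gaussian} (there in the form $(I+\tfrac{16}{\epsilon^2}K_0K_1)^{1/2}=K_0^{1/2}(I+\tfrac{16}{\epsilon^2}K_0^{1/2}K_1K_0^{1/2})^{1/2}K_0^{-1/2}$), specialized to $g(x)=(1+\tfrac{16}{\epsilon^2}x)^{1/2}$. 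Throughout I write $\tau=16/\epsilon^2$, $U=(I+\tau CD)^{1/2}$, $V=(I+\tau DC)^{1/2}$, $S=(I+\tau D^{1/2}CD^{1/2})^{1/2}$ and $T=(I+\tau C^{1/2}DC^{1/2})^{1/2}$.

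First I would simplify the right–hand side. Distributing the congruence $C^{-1/2}(\,\cdot\,)C^{-1/2}$ over the three interior summands, the term $-\tfrac{\epsilon}{4}C^{-1/2}(\tfrac{4}{\epsilon}C)C^{-1/2}=-I$ cancels the leading $I$, so the right–hand side collapses to $\tfrac{\epsilon}{4}C^{-1/2}(T-I)C^{-1/2}$. Applying $g(PR)P=Pg(RP)$ with $P=C^{1/2}$, $R=C^{1/2}D$ gives $UC^{1/2}=C^{1/2}T$, hence $T=C^{-1/2}UC^{1/2}$ and the right–hand side becomes $\tfrac{\epsilon}{4}C^{-1}(U-I)$. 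Analogously, with $P=D^{1/2}$, $R=D^{1/2}C$ one gets $VD^{1/2}=D^{1/2}S$, so $I+S=D^{-1/2}(I+V)D^{1/2}$ and the left–hand side becomes $\tfrac{4}{\epsilon}(I+V)^{-1}D$.

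It then remains to verify $\tfrac{4}{\epsilon}(I+V)^{-1}D=\tfrac{\epsilon}{4}C^{-1}(U-I)$, i.e.\ $\tau D=(I+V)\,C^{-1}(U-I)$. The key relation here is $UC=CV$, itself the instance of $g(PR)P=Pg(RP)$ with $P=C$, $R=D$; equivalently $C^{-1}U=VC^{-1}$. Substituting, $(I+V)C^{-1}(U-I)=(I+V)(VC^{-1}-C^{-1})=(I+V)(V-I)C^{-1}=(V^2-I)C^{-1}$, and since $V^2=I+\tau DC$ this equals $\tau DC\,C^{-1}=\tau D$, which closes the argument.

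I expect the main thing to get right to be the consistency of the square roots: I need $C^{-1}UC$ to equal the principal root $V$, not merely some root of $I+\tau DC$. Invoking the exact matrix–function identity $g(CD)C=Cg(DC)$ (as above) rather than squaring and comparing sidesteps this entirely, the positivity of $C,D$ guaranteeing that the spectra of $CD$ and $DC$ avoid the negative axis so that $(1+\tau x)^{1/2}$ is analytic near them. The remaining risk is purely bookkeeping — keeping the ``upward'' congruence by $D^{1/2}$ and the ``downward'' congruence by $C^{-1/2}$ straight — since every genuine computation is a single cancellation. These manipulations closely parallel those surrounding~\eqref{eq:showing_A} in the proof of Theorem~\ref{thm:ent_was_gaussian}, so as an alternative one could derive the lemma directly from that chain of equalities together with the Woodbury identity~\eqref{eq:woodbury}.
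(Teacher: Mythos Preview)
Your proof is correct and takes a genuinely different route from the paper's. The paper's argument is indirect: it defines $A,B$ as in \eqref{eq:gaussian_system_final} (with $C,D$ in place of $K_0,K_1$) and substitutes the explicit $B$ into the first relation of the Schr\"odinger system \eqref{eq:gaussian_system2}, a relation already verified in the proof of Theorem~\ref{thm:ent_was_gaussian}; the lemma then drops out after multiplying by $-\epsilon$ and rearranging. In contrast, you verify the identity directly and self-containedly: you strip both sides of their outer congruences via the functional-calculus identity $g(PR)P=Pg(RP)$, and the whole statement collapses to the one-line factorization $(I+V)(V-I)=V^2-I=\tau DC$. The paper's route has the advantage of recycling the Schr\"odinger-system work and making the lemma's provenance transparent; your route is shorter, needs none of the system-solving machinery, and makes clear that the identity is a purely algebraic fact about matrix square roots rather than anything specific to optimal transport. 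Your closing remark that the argument parallels the manipulations around \eqref{eq:showing_A} is accurate and anticipates the paper's own strategy.
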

\begin{proof}
Similarly to \eqref{eq:gaussian_system_final}, let 
\begin{equation}
    \begin{aligned}
        A &= \frac{1}{4}C^{-\frac{1}{2}}\left(
        I + \frac{4}{\epsilon}C - \left(I + \frac{16}{\epsilon^2}C^\frac{1}{2}DC^\frac{1}{2}\right)^\frac{1}{2}
        \right)C^{-\frac{1}{2}}\\
        B &= \frac{1}{4}D^{-\frac{1}{2}}\left(
        I + \frac{4}{\epsilon}D - \left(I + \frac{16}{\epsilon^2}D^\frac{1}{2}CD^\frac{1}{2}\right)^\frac{1}{2}
        \right)D^{-\frac{1}{2}}.
    \end{aligned}
\end{equation}
Then, substituting $B$ into the first equation of \eqref{eq:gaussian_system2} (while remembering to replace $K_0 \mapsfrom C$, $K_1 \mapsfrom D$) results in
\begin{equation}
\begin{aligned}
A =& \frac{1}{\epsilon}I +\frac{1}{\epsilon^2}
        \left(B-\frac{1}{\epsilon}I  - \frac{1}{2}D^{-1} \right)^{-1}\\
    =& \frac{1}{\epsilon}I + \frac{1}{\epsilon^2}\left(
    \frac{1}{4}D^{-\frac{1}{2}}\left(
        I + \frac{4}{\epsilon}D - \left(I + \frac{16}{\epsilon^2}D^\frac{1}{2}CD^\frac{1}{2}\right)^\frac{1}{2}
        \right)D^{-\frac{1}{2}}\right.\\
        &\left.\phantom{\frac{K}{K}^\frac{1}{2}}-\frac{1}{\epsilon}I - \frac{1}{2}D^{-1}
    \right)^{-1}\\
    =& \frac{1}{\epsilon}I - \frac{4}{\epsilon^2}D^\frac{1}{2}\left(I +
    \left(I + \frac{16}{\epsilon^2}D^\frac{1}{2}CD^\frac{1}{2}\right)^\frac{1}{2}    
    \right)^{-1}D^\frac{1}{2},
\end{aligned}
\end{equation}
and so the result follows from substituting in $A$, multiplying both sides by $-\epsilon$, and moving $-I$ from right-hand side to left-hand side.
\qed
\end{proof}

Next, we  study the limiting cases of $\epsilon$ going to $0$ and $\infty$, reconfirming that the Sinkhorn divergence interpolates between $2$-Wasserstein and $MMD$~\cite{feydy18,genevay17,ramdas2017}.
\begin{proposition}\label{prop:epsilon_convergence}
Let $\mu_i = \N(m_i,K_i)$, for $i=0,1$, be two multivariate Gaussian distributions in $\mathbb{R}^n$. Then,
\begin{itemize}
    \item[\textbf{a.}]
    \begin{equation}
    \begin{aligned}
        \lim\limits_{\epsilon \to 0} \OT^\epsilon_{d^2}(\mu_0,\mu_1) &= W_2^2(\mu_0, \mu_1)\\
        \lim\limits_{\epsilon \to \infty} \OT^\epsilon_{d^2}(\mu_0, \mu_1) &=
        \|m_0-m_1\|^2 + \Tr(K_0) + \Tr(K_1)
    \end{aligned}
    \end{equation}
    \item[\textbf{b.}]
    \begin{equation}
    \begin{aligned}
        \lim\limits_{\epsilon \to 0} S^\epsilon_{2}(\mu_0,\mu_1) &= W_2^2(\mu_0, \mu_1) \\
        \lim\limits_{\epsilon \to \infty} S^\epsilon_{2}(\mu_0,\mu_1) &= \|m_0-m_1\|^2 \\
    \end{aligned}
    \end{equation}
    \item[\textbf{c.}] For $t\in [0,1]$, denote by $\mu_t$ the $2$-Wasserstein geodesic given in \eqref{eq:gaus_was_geodesic}, and by $\mu_t^\epsilon$ the entropic $2$-Wasserstein interpolant between $\mu_0$ and $\mu_1$ given in \eqref{eq:gaus_ent_was_geodesic}. Then,
    \begin{equation}
        \lim\limits_{\epsilon \to 0} \mu_t^\epsilon = \mu_t.
    \end{equation}
\end{itemize}
\end{proposition}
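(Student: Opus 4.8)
The plan is to derive everything from the closed-form expressions in Theorem~\ref{thm:ent_was_gaussian} and its corollary, reducing the matrix limits to scalar asymptotics via simultaneous diagonalization. Throughout, let $\lambda_1,\dots,\lambda_n\geq 0$ denote the eigenvalues of $K_0K_1$, which coincide with those of the symmetric positive semi-definite matrix $K_1^{1/2}K_0K_1^{1/2}$; then the eigenvalues of $M^\epsilon$ are $1+\sqrt{1+16\lambda_i/\epsilon^2}$, so that $\Tr(M^\epsilon)=\sum_i(1+\sqrt{1+16\lambda_i/\epsilon^2})$ and $\log\det(M^\epsilon)=\sum_i\log(1+\sqrt{1+16\lambda_i/\epsilon^2})$. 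Both limits in part~\textbf{a} are indeterminate forms, since in \eqref{eq:gaus_ent_was_dist} a quantity that vanishes (resp. diverges) is multiplied by the prefactor $\epsilon/2\to\infty$ (resp. $\to 0$); the core of the argument is a careful asymptotic expansion.

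For $\epsilon\to\infty$ I would set $x_i=16\lambda_i/\epsilon^2\to 0$ and Taylor expand $\sqrt{1+x_i}=1+x_i/2+O(x_i^2)$, giving for each eigenvalue
$$\left(1+\sqrt{1+x_i}\right)-\log\left(1+\sqrt{1+x_i}\right)+\log 2-2=\tfrac{x_i}{4}+O(x_i^2).$$
Multiplying by $\epsilon/2$ turns this into $2\lambda_i/\epsilon+O(\epsilon^{-3})\to 0$, so the entire correction term in \eqref{eq:gaus_ent_was_dist} vanishes and $\OT_{d^2}^\epsilon(\mu_0,\mu_1)\to\|m_0-m_1\|^2+\Tr(K_0)+\Tr(K_1)$. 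For $\epsilon\to 0$ the relevant regime is $x_i\to\infty$: I would use $\sqrt{1+x_i}=\tfrac{4}{\epsilon}\sqrt{\lambda_i}+O(\epsilon)$ for $\lambda_i>0$ (while $\lambda_i=0$ contributes nothing) to obtain $\tfrac{\epsilon}{2}\Tr(M^\epsilon)\to 2\sum_i\sqrt{\lambda_i}=2\Tr(K_1^{1/2}K_0K_1^{1/2})^{1/2}$, while $\tfrac{\epsilon}{2}\log\det(M^\epsilon)=O(\epsilon\log\epsilon)\to 0$ and $\tfrac{\epsilon}{2}(n\log 2-2n)\to 0$. Comparing with \eqref{eq:gaus_was_d} yields $\OT_{d^2}^\epsilon(\mu_0,\mu_1)\to W_2^2(\mu_0,\mu_1)$.

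Part~\textbf{b} then follows with no further asymptotics by writing the Sinkhorn divergence through its definition \eqref{def:sinkhorn}, $S_2^\epsilon(\mu_0,\mu_1)=\OT_{d^2}^\epsilon(\mu_0,\mu_1)-\tfrac12\OT_{d^2}^\epsilon(\mu_0,\mu_0)-\tfrac12\OT_{d^2}^\epsilon(\mu_1,\mu_1)$, and passing to the limit termwise using part~\textbf{a}. As $\epsilon\to 0$ the self-terms tend to $W_2^2(\mu_i,\mu_i)=0$, leaving $W_2^2(\mu_0,\mu_1)$; as $\epsilon\to\infty$ the trace contributions $\Tr(K_0)+\Tr(K_1)$ from the cross-term cancel exactly against $\tfrac12(2\Tr K_0+2\Tr K_1)$ from the self-terms, leaving $\|m_0-m_1\|^2$. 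One may instead verify these directly from \eqref{eq:gauss_sinkhorn}, but the termwise route is cleaner. For part~\textbf{c} I would pass to the limit in the closed form \eqref{eq:gaus_ent_was_geodesic}: the mean $m_t$ is independent of $\epsilon$, and since $(\tfrac{\epsilon^2}{16}I+K_0K_1)^{1/2}\to(K_0K_1)^{1/2}$ by continuity of the principal matrix square root on matrices with spectrum bounded away from the negative axis, the covariance $K_t^\epsilon$ converges to the Wasserstein geodesic covariance $K_t$ of \eqref{eq:gaus_was_geodesic}; convergence of the Gaussian parameters gives $\mu_t^\epsilon\to\mu_t$.

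The main obstacle is the asymptotic analysis in part~\textbf{a}: both limits are of the indeterminate type $0\cdot\infty$, so it is not enough to compute the leading behaviour of $M^\epsilon$ — one must track the correct order of the expansion (the second-order term in the $\epsilon\to\infty$ case, and the cancellation of the diverging $\log\det$ against the $\epsilon/2$ prefactor in the $\epsilon\to 0$ case). A secondary subtlety, handled by the eigenvalue reduction, is that $K_0K_1$ need not be symmetric and the covariances may be singular, so the scalar analysis must accommodate vanishing eigenvalues $\lambda_i=0$ and the continuity argument in part~\textbf{c} must use the principal (rather than symmetric) square root.
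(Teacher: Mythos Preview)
Your proposal is correct and follows essentially the same strategy as the paper: reduce the matrix expressions in \eqref{eq:gaus_ent_was_dist} and \eqref{eq:gaus_ent_was_geodesic} to scalar asymptotics via the eigenvalues $\lambda_i$ of $K_0K_1$, then compute the limits termwise. The only cosmetic difference is that for the $\epsilon\to\infty$ correction term the paper treats the trace and $\log\det$ pieces separately (using the algebraic identity $\epsilon(\sqrt{1+x}-1)=16\lambda/(\epsilon+\sqrt{\epsilon^2+16\lambda})$ and L'Hospital, respectively), whereas you combine them into a single Taylor expansion $x_i/4+O(x_i^2)$; your route is slightly more compact but otherwise equivalent.
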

\begin{proof}
    \textbf{Part a.} The $\epsilon\to 0$ case is a straight-forward computation
   \begin{equation}
    \begin{aligned}
    \OT_{d^2}^\epsilon(\mu_0, \mu_1)
    =& \|m_0 - m_1\|^2
    + \Tr(K_0) + \Tr(K_1)\\
    &- \frac{\epsilon}{2}\left(
    \Tr(M^\epsilon) - \log \det(M^\epsilon) + n\log 2 - 2n
    \right)\\
    =& \|m_0-m_1\|^2 + \Tr(K_0) + \Tr(K_1)\\
    &- 2 \Tr\left(\frac{\epsilon}{4}I + \left(\frac{\epsilon^2}{16}I + K_0K_1\right)^\frac{1}{2}\right)\\
    &+ \frac{\epsilon}{2}\log\left(\det\left(
    \frac{\epsilon}{4}I + \left(\frac{\epsilon^2}{16}I + K_0K_1\right)^\frac{1}{2}\right)\right)\\
    &+\frac{\epsilon n}{2}(\log{2}-\log{\epsilon}+2).
    \end{aligned}
    \end{equation}
    
Therefore, since $\epsilon \log \epsilon \to 0$ when $\epsilon \to 0$,
    \begin{equation}
    \begin{aligned}
    \lim_{\ep\to 0}\OT_{d^2}^\epsilon(\mu_0, \mu_1) &=  \|m_0-m_1\|^2 + \Tr(K_0) + \Tr(K_1) - 2 \Tr\left(K_0K_1\right)^\frac{1}{2}\\
    &= W_2^2(\mu_0,\mu_1).
    \end{aligned}
    \end{equation}

We now compute the limit when $\ep\to\infty$. It is enough to show that the term 
\begin{equation}
\frac{\epsilon}{2}\left(\Tr(M^\epsilon) - \log\det\left(M^\epsilon\right) + n\log 2 - 2n\right),
\end{equation}
goes to 0 when $\ep\to\infty$. In fact, denote by $\{\lambda_i\}_{i=1}^n$ the eigenvalues of $K_1K_2$. Then, 
    \begin{equation}
    \begin{aligned}
        &\frac{\epsilon}{2}\left(\Tr(M^\epsilon) - \log\det\left(M^\epsilon\right) + n\log 2 - 2n\right)\\
        =&\frac{\epsilon}{2}\sum_{i=1}^n \left(-1 + \left(1 + \frac{16}{\epsilon^2}\lambda_i\right)^\frac{1}{2} - \log\left(\frac{1}{2}\left(
        1 + \left(1+\frac{16}{\epsilon^2}\lambda_i\right)^\frac{1}{2}
        \right)\right)\right).
    \end{aligned}
    \end{equation} 
    
So, first notice that for any $\lambda > 0$, 
    \begin{equation}
        \epsilon\left(-1 + \left(1 + \frac{16}{\epsilon^2}\lambda\right)^\frac{1}{2}\right)= \frac{16\lambda}{\epsilon + \left(\epsilon + 16\lambda\right)^\frac{1}{2}}
        \overset{\epsilon\to \infty}{=} 0.
    \end{equation}
    Second, we have 
    \begin{equation}
        \begin{aligned}
        &\lim\limits_{\epsilon \to \infty}\epsilon \log\left(\frac{1}{2}\left(
        1 + \left(1 + \frac{16}{\epsilon^2}\lambda \right)^\frac{1}{2}
        \right)\right)\\
        \overset{\mathrm{L'Hospital}}{=}& \lim\limits_{\epsilon\to\infty}
        \frac{16\lambda}{\epsilon^3\left(1+\left(1 + \frac{16}{\epsilon^2}\lambda \right)^\frac{1}{2}\right)\left(1 + \frac{16}{\epsilon^2}\lambda \right)^\frac{1}{2}\log^2\left(
        \frac{1}{2}\left(1 + \left(1 + \frac{16}{\epsilon^2}\lambda \right)^\frac{1}{2}\right)
        \right)}\\
        =&0,
        \end{aligned}
    \end{equation}
     and so the result follows.
    
    \textbf{Part b.} Straight-forward application of the above result to~\eqref{eq:gauss_sinkhorn}.
    
    \textbf{Part c.}
    By a straight-forward computation on \eqref{eq:gaus_ent_was_geodesic},
    \begin{equation}
    \begin{aligned}
        K^{\epsilon}_t=& (1-t)^2K_0 + t^2K_1 + t(1-t)
    \left[\left(\frac{\epsilon^2}{16}I + K_0K_1\right)^{1/2}\right.\\
    &+\left. \left(\frac{\epsilon^2}{16}I + K_1K_0\right)^{1/2}\right]
    \\
    \overset{\epsilon\to 0}{=}&(1-t)^2K_0 + t^2K_1 + t(1-t) [(K_0K_1)^{1/2} + (K_1K_0)^{1/2}]\\
    =& K_t.
    \end{aligned}
    \end{equation}
\qed
\end{proof}

\section{Entropic and Sinkhorn Barycenters}\label{sec:barycenter}
In this section, we compute barycenters under the entropic regularization of the $2$-Wasserstein distance (e.g. \cite{benamou15,bigot19,cazelles17,doucet14,DMaGer19,kroshnin19,lin19}) and the $2$-Sinkhorn divergence of a population of multivariate Gaussians, restricted to the manifold of Gaussians.

\textbf{Entropic $2$-Wasserstein barycenter.}  Given $N$ probability measures $\mu_i\in \Probs(\R^n)$, $i=1,2,..,N$, the entropic barycenter $\bar{\mu}$ with weights $\lambda_i\geq 0$ is defined in the vein  of  \emph{Karcher} and \emph{Fr\'echet means}, given as

\begin{equation}\label{def:barycenter}
    \bar{\mu} := \argmin\limits_{\mu \in \Probs(\R^n)} \sum_{i=1}^N \lambda_i\OT^\epsilon_{d^2}(\mu, \mu_i), \quad \sum^N_{i=1}\lambda_i = 1.
\end{equation}
Then, \eqref{def:barycenter} is strictly convex, as $\OT_c^\epsilon(\mu,\nu)$ is strictly convex in both $\mu$ and $\nu$ as stated by Prop.~\ref{prop:convex_entropic_transport}.

Next, let us focus on the Gaussian case. We lack the proof that such a barycenter will indeed be a Gaussian, so do note, that the following statement requires the restriction to Gaussians for the candidate barycenters.

\begin{theorem}[Entropic Barycenter of Gaussians]\label{thm:entropic_was_gaus_barycenter}
Let $\mu_i=\Ncal\left(m_i,K_i\right)$, $i=1,2,...,N$ be a population of multivariate Gaussians. Then, their entropic barycenter \eqref{def:barycenter} with weights $\lambda_i\geq 0$ such that $\sum^N_{i=1}\lambda_i = 1$, restricted to the manifold of Gaussians $\Ncal(\R^n)$, is given by $\bar{\mu}=\Ncal(\bar{m}, \bar{K})$, where
\begin{equation}\label{eq:optimal_k_entropic_barycenter}
\begin{aligned}
    \bar{m} = \sum_{i=1}^N \lambda_i m_i, \quad \bar{K}  = \frac{\epsilon}{4}\sum_{i=1}^N\lambda_i\left(-I + \left(I + \frac{16}{\epsilon^2}\bar{K}^\frac{1}{2}K_i\bar{K}^\frac{1}{2}\right)^\frac{1}{2}\right).
\end{aligned}
\end{equation}
\end{theorem}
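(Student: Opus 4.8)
The plan is to use the closed form from Theorem~\ref{thm:ent_was_gaussian} to reduce the variational problem over Gaussians to a finite-dimensional optimisation over the covariance matrix, and then to read off the Euler--Lagrange (fixed-point) equation. First I would apply Proposition~\ref{prop:restriction_to_centered} to a candidate $\mu=\Ncal(m,K)$, writing
\[
\sum_{i=1}^N\lambda_i\OT_{d^2}^\epsilon(\Ncal(m,K),\Ncal(m_i,K_i)) = \sum_{i=1}^N\lambda_i\|m-m_i\|^2 + \sum_{i=1}^N\lambda_i\OT_{d^2}^\epsilon(\Ncal(0,K),\Ncal(0,K_i)),
\]
so the mean and covariance decouple. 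The mean part is a weighted least-squares problem whose unique minimiser is $\bar m=\sum_i\lambda_i m_i$ because $\sum_i\lambda_i=1$, which gives the first equation of \eqref{eq:optimal_k_entropic_barycenter}.

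Next I would insert the distance formula \eqref{eq:gaus_ent_was_dist_cent}, with $M_i^\epsilon = I + (I+\tfrac{16}{\epsilon^2}KK_i)^{1/2}$. Discarding the terms independent of $K$, the covariance objective reduces to
\[
F(K) = \Tr K - \frac{\epsilon}{2}\sum_{i=1}^N\lambda_i\bigl(\Tr M_i^\epsilon - \log\det M_i^\epsilon\bigr) + \mathrm{const},
\]
and the barycentric condition is $dF=0$ over the cone of symmetric positive-definite matrices. The heart of the argument is the differential of $h_i(K):=\Tr M_i^\epsilon - \log\det M_i^\epsilon$. Setting $S_i=(I+\tfrac{16}{\epsilon^2}KK_i)^{1/2}$ and differentiating $S_i^2=I+\tfrac{16}{\epsilon^2}KK_i$ (a Sylvester relation for $dS_i$, with $d(S_i^2)=\tfrac{16}{\epsilon^2}\,dK\,K_i$) gives $d\Tr S_i=\tfrac12\Tr(S_i^{-1}\,d(S_i^2))$ and, working in the eigenbasis of $S_i$, $d\log\det(I+S_i)=\tfrac12\Tr((I+S_i)^{-1}S_i^{-1}\,d(S_i^2))$. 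Subtracting and using that $S_i$ and $I+S_i$ commute collapses the factor $S_i^{-1}-(I+S_i)^{-1}S_i^{-1}$ to $(M_i^\epsilon)^{-1}$, yielding
\[
dh_i = \frac{8}{\epsilon^2}\Tr\bigl(K_i(M_i^\epsilon)^{-1}\,dK\bigr).
\]

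The remaining, and most delicate, point is that $K_i(M_i^\epsilon)^{-1}$ is built from the \emph{non-symmetric} product $KK_i$, so it is not manifestly symmetric, whereas the gradient of $F$ over symmetric matrices is its symmetric part. Here Lemma~\ref{lemma:schrodinger_system_identity}, applied with $C=\bar K$ and $D=K_i$ and evaluated at a stationary $\bar K$, is exactly the needed device: it rewrites
\[
K_i(M_i^\epsilon)^{-1} = \frac{\epsilon^2}{16}\,\bar K^{-1/2}\bigl(N_i^\epsilon - I\bigr)\bar K^{-1/2}, \qquad N_i^\epsilon := \Bigl(I + \tfrac{16}{\epsilon^2}\bar K^{1/2}K_i\bar K^{1/2}\Bigr)^{1/2},
\]
a manifestly symmetric expression (one verifies the match by transposing the left side and using $(\bar K K_i)^T=K_i\bar K$ together with the conjugation $(I+\tfrac{16}{\epsilon^2}K_i\bar K)^{1/2}=K_i^{1/2}(I+\tfrac{16}{\epsilon^2}K_i^{1/2}\bar K K_i^{1/2})^{1/2}K_i^{-1/2}$). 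Substituting, the condition $dF=0$ reads
\[
\Tr\!\left[\Bigl(I - \tfrac{\epsilon}{4}\bar K^{-1/2}\Bigl(\sum_{i}\lambda_i(N_i^\epsilon-I)\Bigr)\bar K^{-1/2}\Bigr)dK\right]=0
\]
for every symmetric $dK$, so the (symmetric) bracket vanishes; conjugating by $\bar K^{1/2}$ gives $\bar K = \tfrac{\epsilon}{4}\sum_i\lambda_i(N_i^\epsilon - I)$, which is the claimed fixed point.

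I expect the matrix-square-root differentiation together with the symmetrisation to be the main obstacle, and Lemma~\ref{lemma:schrodinger_system_identity} is precisely what turns the opaque factor $K_i(M_i^\epsilon)^{-1}$ into a symmetric quantity expressed through $N_i^\epsilon$, making the stationarity equation legible. Finally, to pass from a critical point to \emph{the} barycenter, I would note that the functional \eqref{def:barycenter} is strictly convex by Proposition~\ref{prop:convex_entropic_transport}, so its minimiser is unique; restricting to the Gaussian manifold, an interior minimiser over the positive-definite cone must be stationary and hence satisfies the derived fixed-point relation. (A one-dimensional sanity check, in which $N_i^\epsilon$ becomes the scalar $(1+\tfrac{16}{\epsilon^2}\bar\sigma^2\sigma_i^2)^{1/2}$ and the identity $\tfrac{\sigma_i^2}{1+N_i^\epsilon}=\tfrac{\epsilon^2}{16\bar\sigma^2}(N_i^\epsilon-1)$ reproduces the same equation, confirms the computation.)
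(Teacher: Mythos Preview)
Your proposal is correct and follows essentially the same approach as the paper: decouple the mean via Proposition~\ref{prop:restriction_to_centered}, differentiate the closed form \eqref{eq:gaus_ent_was_dist_cent} in $K$, and use Lemma~\ref{lemma:schrodinger_system_identity} to recast the resulting $K_i$-centered expression as a $\bar K$-centered, manifestly symmetric one, yielding the fixed-point relation. The only cosmetic difference is that the paper differentiates through the symmetric parametrisation $K_i^{1/2}KK_i^{1/2}$ (so the gradient is symmetric from the outset and a Woodbury step combines the trace and $\log\det$ pieces), whereas you differentiate through the non-symmetric product $KK_i$ and invoke Lemma~\ref{lemma:schrodinger_system_identity} both to symmetrise and to recenter; the two computations meet at the same identity and the same conclusion.
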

\begin{proof}
Proposition~\ref{prop:restriction_to_centered} allows us to split the geometry into the $L^2$-geometry between the means and the entropic $2$-Wasserstein geometry between the centered Gaussians (or their covariances). Then, it immediately follows that
\begin{equation}
    \bar{m} = \sum_{i=1}^N \lambda_i m_i.
\end{equation}
Therefore, we restrict our analysis to the case of centered distributions. Remark again, that in general, the minimizer of \eqref{def:barycenter} might not be Gaussian, even when the population consists of Gaussians. However, here we will look for the barycenter on the manifold of Gaussian measures.

We begin with a straight-forward computation of the gradient of the objective given in~\eqref{def:barycenter}
\begin{equation}\label{eq:barycenter_proof0}
    \begin{aligned}
        &\nabla_K\sum_{i=1}^N \lambda_i \OT^\epsilon_{d^2}\left(\Ncal(0,K),\Ncal(0,K_i)\right)\\
        =& \nabla_K \sum_{i=1}^N \lambda_i \left(
        \Tr K + \Tr K_i - \frac{\epsilon}{2}
        \Tr\left(I + \left(I + \frac{16}{\epsilon^2}K_i^\frac{1}{2}KK_i^\frac{1}{2}\right)^\frac{1}{2}
        \right)\right.\\
        &+ \frac{\epsilon}{2}\log\det\left(
        I + \left(I + \frac{16}{\epsilon^2}K_i^\frac{1}{2}KK_i^\frac{1}{2}\right)^\frac{1}{2}
        \right)\\
        &\left. \phantom{\frac{\sum^2}{\sum^2}}-\frac{\epsilon}{2}\left(n\log 2 - 2n\right)
        \right),\\
        =& \sum_{i=1}^N \lambda_i 
        \left(\nabla_K \Tr K  - \frac{\epsilon}{2}
        \nabla_K\Tr\left(I + \left(I + \frac{16}{\epsilon^2}K_i^\frac{1}{2}KK_i^\frac{1}{2}\right)^\frac{1}{2}
        \right)\right.\\
        &+\left.\frac{\epsilon}{2}\nabla_K\log\det\left(
        I + \left(I + \frac{16}{\epsilon^2}K_i^\frac{1}{2}KK_i^\frac{1}{2}\right)^\frac{1}{2}
        \right)\right).
    \end{aligned}
\end{equation}
where we used the closed-form solution obtained in the part \textbf{b.} of Theorem \ref{thm:ent_was_gaussian}.  Now, recall that $\nabla_K \Tr K = I$. For the second term, it holds 
\begin{equation}\label{eq:barycenter_proof1}
\begin{aligned}
    &\nabla_K \Tr\left(I + \left(I + \frac{16}{\epsilon^2}K_i^\frac{1}{2}KK_i^\frac{1}{2}\right)^\frac{1}{2}\right)\\
    =& \frac{8}{\epsilon^2}K_i^\frac{1}{2}\Tr\left(I + \left(I + \frac{16}{\epsilon^2}K_i^\frac{1}{2}KK_i^\frac{1}{2}\right)^\frac{1}{2}\right)^{-1}K_i^\frac{1}{2}.
\end{aligned}
\end{equation}
Finally, for the third term, we have  
\begin{equation}\label{eq:barycenter_proof2}
    \begin{aligned}
    &\nabla_K \log\det\left(
    I + \left(I + \frac{16}{\epsilon^2}K_i^\frac{1}{2}KK_i^\frac{1}{2}\right)^\frac{1}{2}
    \right)\\
    &= \nabla_K \trace\left(
    \Log\left(
    I + \left(I + \frac{16}{\epsilon^2}K_i^\frac{1}{2}KK_i^\frac{1}{2}\right)^\frac{1}{2}
    \right)
    \right)\\
    &=
    \frac{8}{\epsilon^2}K_i^\frac{1}{2}\left(
    \left(I + \frac{16}{\epsilon^2}K_i^\frac{1}{2}KK_i^\frac{1}{2}\right) + 
    \left(I + \frac{16}{\epsilon^2}K_i^\frac{1}{2}KK_i^\frac{1}{2}\right)^\frac{1}{2}
    \right)^{-1}K_i^\frac{1}{2},
    \end{aligned}
\end{equation}   
where $\Log(M)$ denotes the matrix square-root, and we use the results
\begin{equation}
    \log\det(M) = \Tr\left(\Log(M)\right),\quad \nabla_M \Tr f(M) = f'(M),
\end{equation}
when $f$ is a matrix function given by a Taylor series, such as the matrix square-root or the matrix logarithm. 

Substituting \eqref{eq:barycenter_proof1} and \eqref{eq:barycenter_proof2} in \eqref{eq:barycenter_proof0}, and using the Woodbury matrix inverse identity ~\eqref{eq:woodbury}, we get
\begin{equation}\label{eq:barycenter_proof3}
\begin{aligned}
    &\nabla_K\sum_{i=1}^N \lambda_i \OT^\epsilon_{d^2}\left(\Ncal(0,K),\Ncal(0,K_i)\right)\\
    =&\sum_{i=1}^N \lambda_i\left(
    I - \frac{4}{\epsilon} K_i^\frac{1}{2}\left(
    I + \left(I + \frac{16}{\epsilon^2}K_i^\frac{1}{2}KK_i^\frac{1}{2}\right)^\frac{1}{2}
    \right)^{-1}K_i^\frac{1}{2}\right)\\
    =& \frac{\epsilon}{4}\sum^N_{i=1}\lambda_iK^{-\frac{1}{2}}\left(
    I + \frac{4}{\epsilon}K - \left(I + \frac{16}{\epsilon^2}K^\frac{1}{2}K_iK^\frac{1}{2}\right)^\frac{1}{2}
    \right)K^{-\frac{1}{2}}.
\end{aligned}
\end{equation}
The last equality follows from Lemma~\ref{lemma:schrodinger_system_identity} with the substitutions $C\mapsfrom K$ and $D\mapsfrom K_i$. Finally, setting \eqref{eq:barycenter_proof3} to zero, we get that the optimal $\bar{K}$ satisfies the expression given in~\eqref{eq:optimal_k_entropic_barycenter}.
\qed
\end{proof}

\textbf{Sinkhorn barycenter.} Now, we compute the barycenter of a population of Gaussians under the Sinkhorn divergence, defined by
\begin{equation}\label{def:sinkhorncenter}
    \bar{\mu} := \argmin\limits_{\mu \in \Probs(\R^n)} \sum_{i=1}^N \lambda_i S^\epsilon_2(\mu, \mu_i), \quad \lambda_i\geq 0 \text{ and } \sum^N_{i=1}\lambda_i = 1.
\end{equation}
Note that as $S_\epsilon^2(\mu,\nu)$ is convex in both $\mu$ and $\nu$~\cite[Thm. 1]{feydy18}, and so \eqref{def:sinkhorncenter} is convex in $\mu$. Now, similarly to the entropic barycenter case, we look for the barycenter of a population of Gaussians in the space of Gaussians $\Ncal(\R^n)$.

\begin{theorem}[Sinkhorn Barycenter of Gaussians]\label{thm:sinkhornbarycenter}
Let $\mu_i=\Ncal\left(m_i,K_i\right)$, $i=1,2,...,N$ be a population of multivariate Gaussians. Then, their Sinkhorn barycenter \eqref{def:sinkhorncenter} with weights $\lambda_i\geq 0$ such that $\sum^N_{i=1}\lambda_i = 1$, restricted to the manifold of Gaussians $\Ncal(\R^n)$, is given by $\bar{\mu}=\Ncal(\bar{m}, \bar{K})$, where
\begin{equation}\label{eq:sinkhorn_barycenter}
    \bar{m} = \sum_{i=1}^N \lambda_i m_i, \quad 
    \bar{K}  = \frac{\epsilon}{4}\left(-I + \left(\sum_{i=1}^N\lambda_i\left(I + \frac{16}{\epsilon^2}\bar{K}^\frac{1}{2}K_i\bar{K}^\frac{1}{2}\right)^\frac{1}{2}\right)^2
    \right)^\frac{1}{2}.
\end{equation}
\end{theorem}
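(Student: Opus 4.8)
The plan is to expand the Sinkhorn divergence via its definition \eqref{def:sinkhorn} and exploit $\sum_i \lambda_i = 1$ to reduce the objective to a tractable form. Writing
\begin{equation}
\sum_{i=1}^N \lambda_i S_2^\epsilon(\mu,\mu_i) = \sum_{i=1}^N \lambda_i \OT_{d^2}^\epsilon(\mu,\mu_i) - \frac{1}{2}\OT_{d^2}^\epsilon(\mu,\mu) - \frac{1}{2}\sum_{i=1}^N \lambda_i \OT_{d^2}^\epsilon(\mu_i,\mu_i),
\end{equation}
the last sum is independent of $\mu$ and drops out of the minimization. By Proposition~\ref{prop:restriction_to_centered} the means separate additively, giving $\bar m = \sum_i \lambda_i m_i$ exactly as in the $2$-Wasserstein and entropic barycenter cases, so it remains to minimize, over centered Gaussians $\Ncal(0,K)$, the functional
\begin{equation}
\Phi(K) = \sum_{i=1}^N \lambda_i \OT_{d^2}^\epsilon\!\left(\Ncal(0,K),\Ncal(0,K_i)\right) - \frac{1}{2}\OT_{d^2}^\epsilon\!\left(\Ncal(0,K),\Ncal(0,K)\right),
\end{equation}
using the closed form from part \textbf{b.} of Theorem~\ref{thm:ent_was_gaussian}.

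Next I would compute $\nabla_K \Phi$ and set it to zero. The gradient of the first sum is precisely the quantity already derived in the proof of Theorem~\ref{thm:entropic_was_gaus_barycenter}, namely
\begin{equation}
\frac{\epsilon}{4}\sum_{i=1}^N \lambda_i K^{-\frac{1}{2}}\left(I + \frac{4}{\epsilon}K - \left(I + \frac{16}{\epsilon^2}K^\frac{1}{2}K_iK^\frac{1}{2}\right)^\frac{1}{2}\right)K^{-\frac{1}{2}},
\end{equation}
obtained through \eqref{eq:barycenter_proof1}, \eqref{eq:barycenter_proof2}, the Woodbury identity \eqref{eq:woodbury}, and Lemma~\ref{lemma:schrodinger_system_identity}. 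The main obstacle is the gradient of the self-term $\OT_{d^2}^\epsilon(\Ncal(0,K),\Ncal(0,K))$, where \emph{both} arguments depend on $K$. Here I would invoke the symmetry $\OT_{d^2}^\epsilon(\mu,\nu)=\OT_{d^2}^\epsilon(\nu,\mu)$, so that by the chain rule the total derivative equals twice the partial derivative in the first argument evaluated on the diagonal; substituting $K_i \mapsto K$ in the single-term gradient above then gives
\begin{equation}
\nabla_K \OT_{d^2}^\epsilon\!\left(\Ncal(0,K),\Ncal(0,K)\right) = \frac{\epsilon}{2}K^{-\frac{1}{2}}\left(I + \frac{4}{\epsilon}K - \left(I + \frac{16}{\epsilon^2}K^2\right)^\frac{1}{2}\right)K^{-\frac{1}{2}}.
\end{equation}

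Combining the two pieces, the common factor $\frac{\epsilon}{4}K^{-1/2}(\,\cdot\,)K^{-1/2}$ factors out, and because $\sum_i\lambda_i = 1$ the terms $I + \frac{4}{\epsilon}K$ cancel against their counterpart coming from the self-term, leaving the stationarity condition
\begin{equation}
\left(I + \frac{16}{\epsilon^2}K^2\right)^\frac{1}{2} = \sum_{i=1}^N \lambda_i\left(I + \frac{16}{\epsilon^2}K^\frac{1}{2}K_iK^\frac{1}{2}\right)^\frac{1}{2}.
\end{equation}
Squaring both sides, isolating $K^2$, and taking the positive-definite square root yields the claimed fixed-point characterization \eqref{eq:sinkhorn_barycenter}. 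Besides the self-term differentiation, the only points needing care are the invariance of trace and determinant under $K^\frac{1}{2}K_iK^\frac{1}{2}\leftrightarrow KK_i$ (so that the formula of Theorem~\ref{thm:ent_was_gaussian} applies with the symmetrized argument), and the observation that a critical point of the objective restricted to $\Ncal(\R^n)$ furnishes the sought barycenter, in line with the treatment of Theorem~\ref{thm:entropic_was_gaus_barycenter}.
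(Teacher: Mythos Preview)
Your proposal is correct and follows essentially the same route as the paper: split $S_2^\epsilon$ into its three $\OT^\epsilon_{d^2}$ pieces, drop the $\mu_i$-only term, reuse the gradient \eqref{eq:barycenter_proof3} from Theorem~\ref{thm:entropic_was_gaus_barycenter} for the cross terms, compute the self-term gradient (the paper simply states the result, while you justify the factor $2$ via symmetry and the chain rule), cancel the $I+\tfrac{4}{\epsilon}K$ contributions using $\sum_i\lambda_i=1$, and solve the resulting stationarity equation for $K$. Your explicit final step of squaring and taking the positive square root to pass from the stationarity condition to \eqref{eq:sinkhorn_barycenter} is exactly what the paper leaves implicit.
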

\begin{proof}
As in the entropic $2$-Wasserstein case, we take $\mu=\Ncal(0,K)$ to be of Gaussian form. Then, we can compute the gradient
\begin{equation}\label{eq:sinkhorn_bary_proof0}
\begin{aligned}
    &\nabla_K\sum_{i=1}^N \lambda_i S_2^\epsilon\left(\Ncal(0,K), \Ncal(0,K_i)\right)\\
    =& \nabla_K\sum_{i=1}^N \lambda_i \Big(
    \OT^\epsilon_{d^2}\left(\Ncal(0,K), \Ncal(0,K_i)\right)\\
    &-\frac{1}{2}
    \OT^\epsilon_{d^2}\left(\Ncal(0,K), \Ncal(0,K)\right)\\
    &-\frac{1}{2}\OT^\epsilon_{d^2}\left(\Ncal(0,K_i), \Ncal(0,K_i)\right)\Big),
\end{aligned}
\end{equation}
where the last term disappears. Then, we can use the gradient of the first term, which we computed in \eqref{eq:barycenter_proof3}. A very similar computation yields
\begin{equation}\label{eq:sinkhorn_bary_proof1}
\begin{aligned}
    \nabla_K\OT^\epsilon_{d^2}\left(K,K\right)
    =  \frac{\epsilon}{2}K^{-\frac{1}{2}}\left(
    I + \frac{4}{\epsilon}K - \left(I + \frac{16}{\epsilon^2}K^2\right)^\frac{1}{2}
    \right)K^{-\frac{1}{2}}.
\end{aligned}
\end{equation}
Substituting \eqref{eq:barycenter_proof3} and \eqref{eq:sinkhorn_bary_proof1} into \eqref{eq:sinkhorn_bary_proof0} yields
\begin{equation}\label{eq:sinkhorn_bary_proof2}
\begin{aligned}
    &\nabla_K\sum_{i=1}^N \lambda_i S_2^\epsilon\left(\Ncal(0,K), \Ncal(0,K_i)\right)\\
    =& \frac{\epsilon}{4}\sum_{i=1}^N\lambda_i K^{-\frac{1}{2}}\left(
    \left(I + \frac{16}{\epsilon^2}K^2\right)^\frac{1}{2}
    - \left(I + \frac{16}{\epsilon^2}K^\frac{1}{2}K_iK^\frac{1}{2}\right)^\frac{1}{2}
    \right)K^{-\frac{1}{2}}.
\end{aligned}
\end{equation}
When \eqref{eq:sinkhorn_bary_proof2} is set to zero, we find, that the optimal $\bar{K}$ satisfies the relation given in~\eqref{eq:sinkhorn_barycenter}.
\qed
\end{proof}

\begin{figure}[h]
    \centering
    \includegraphics[width=\linewidth]{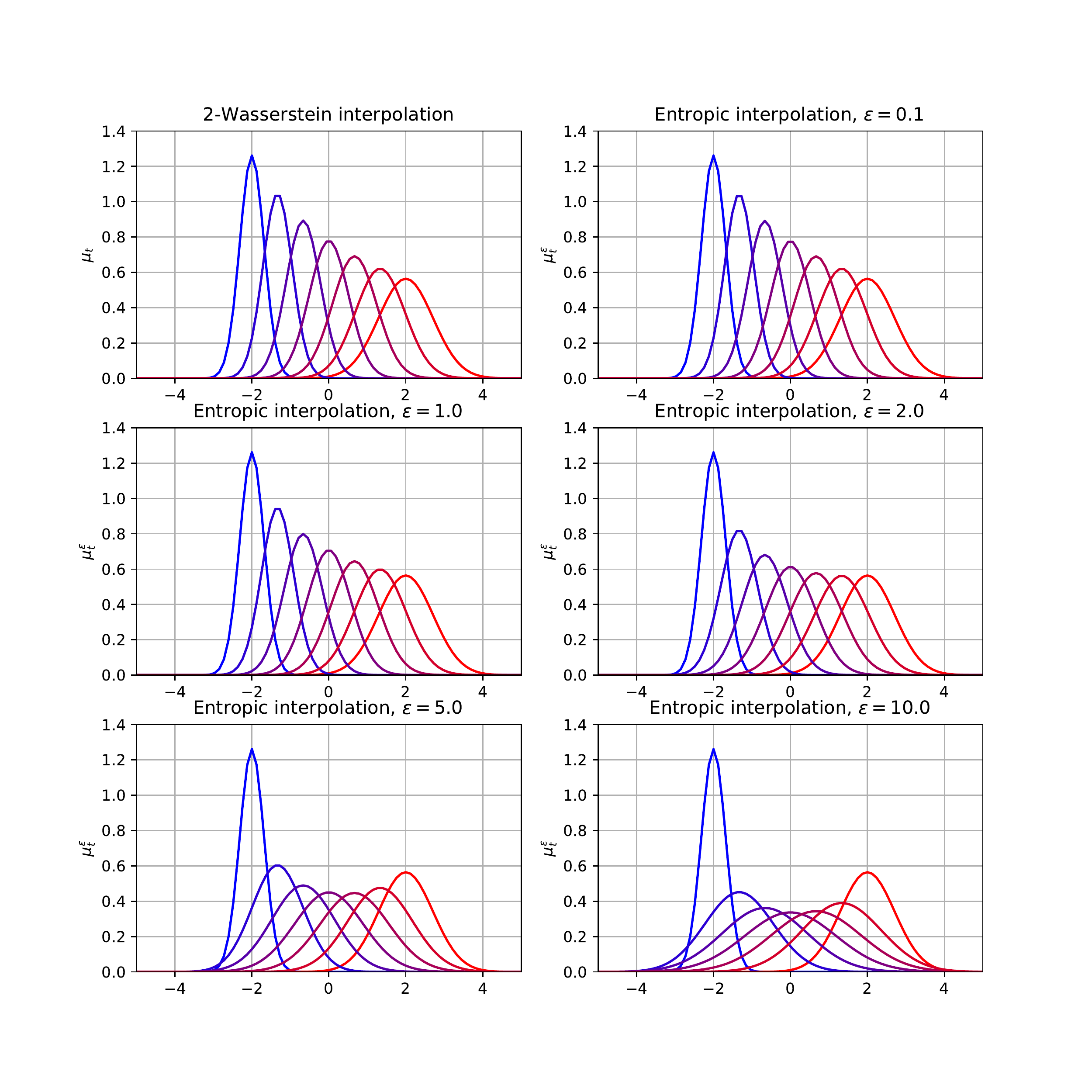}
    \caption{Entropic interpolants $\mu_t^\epsilon$ between two one-dimensional Gaussians given by $\mu_0 = \mathcal{N}(-2,0.1)$ (blue) and $\mu_1 = \mathcal{N}(2,0.5)$ (red), with varying regularization strengths $\epsilon$, accompanied by the $2$-Wasserstein interpolant in the top-left corner (corresponding to $\epsilon=0$).}
    \label{fig:1d_interpolant}
\end{figure}
\textbf{Fixed-point iteration.} The fixed-point iteration algorithm is defined by
\begin{equation}\label{eq:fixed_point}
    x_{k+1} = F(x_{k}),
\end{equation}
where the initial case $x_0$ is handpicked by the user. The \emph{Banach fixed-point theorem} is a well-known result stating that such an iteration converges to a fixed-point, i.e. an element $x$ satisfying $x = F(x)$, if $F$ is a \emph{contraction mapping}.

In the case of the $2$-Wasserstein barycenter given in~\eqref{eq:wasserstein_barycenter}, the fixed-point iteration can be shown to converge~\cite{alvarez16} to the unique barycenter. In the entropic $2$-Wasserstein and the $2$-Sinkhorn cases we leave such a proof as future work. However, while computing the numerical results in Section~\ref{sec:numerics}, the fixed-point iteration always succeeded to converge.

\section{Numerical Illustrations}\label{sec:numerics}
We will now illustrate the resulting entropic $2$-Wasserstein distance and $2$-Sinkhorn divergence for Gaussians by employing the closed-form solutions to visualize entropic interpolations between end point Gaussians. Furthermore, we emply the fixed-point iteration~\eqref{eq:fixed_point} in conjunction with the fixed-point expressions of the barycenters for their visualization.

First, we consider the interpolant between one-dimensional Gaussians given in Fig.~\ref{fig:1d_interpolant}, where the densities of the interpolants are plotted. As one can see, increasing $\epsilon$ causes the middle of the interpolation to flatten out. This results from the Fokker-Plank equation~\eqref{eq:fokker_plank}, which governs the diffusion of the evolution of processes that are objected to Brownian noise. In the limit $\epsilon \to \infty$, we would witness a heat death of the distribution.

The same can be seen in the three-dimensional case, depicted in Fig.~\ref{fig:3d_interpolant}, visualized using the code accompanying~\cite{feragen17}. Here, the ellipsoids are determined by the eigenvectors and -values of the covariance matrix of the corresponding Gaussian, and the colors visualize the level sets of the ellipsoids. Note that a large ellipsoid corresponds to high variance in each direction, and does not actually increase the mass of the distribution. Such visualizations are common in \emph{diffusion tensor imaging} (DTI), where the tensors (covariance matrices) define Gaussian diffusion of water at voxels images produced by magnetic resonance imaging (MRI)~\cite{arsigny06}.

Finally, we consider the entropic $2$-Wasserstein and Sinkhorn barycenters in Fig.~\ref{fig:3d_barycenters}. We consider four different Gaussians, placed in the corners of the square fields in the figure, and plot the barycenters for varying weights, resulting in the \emph{barycentric span} of the four Gaussians. As the results show, the barycenters are very similar under the two frameworks with small $\epsilon$. However, as $\epsilon$ is increased, the Sinkhorn barycenter seems to be more resiliant against the fattening of the barycenters, which can be seen in the $2$-Wasserstein case.

\begin{figure}[H]
    \centering
    \includegraphics[width=\linewidth]{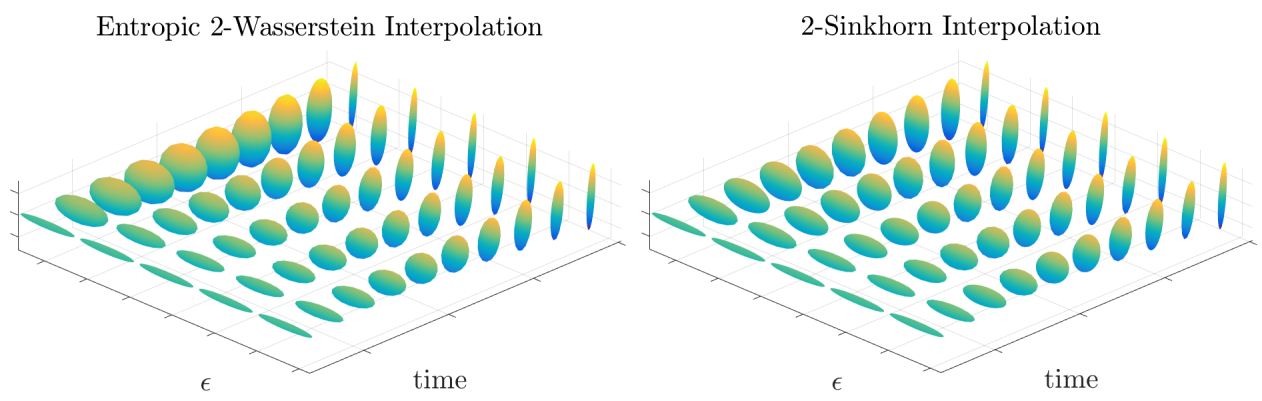}
    \caption{Interpolants between two three-dimensional Gaussians with varying regularization strengths $\epsilon$, accompanied by the $2$-Wasserstein interpolant, given by the first row (parallel to the time axis). The following rows visualize the interpolation for $\epsilon\in\{0.01, 1, 2, 5, 20\}$ in increasing order.}
    \label{fig:3d_interpolant}
\end{figure}

\begin{figure}[H]
    \centering
    \includegraphics[width=\linewidth]{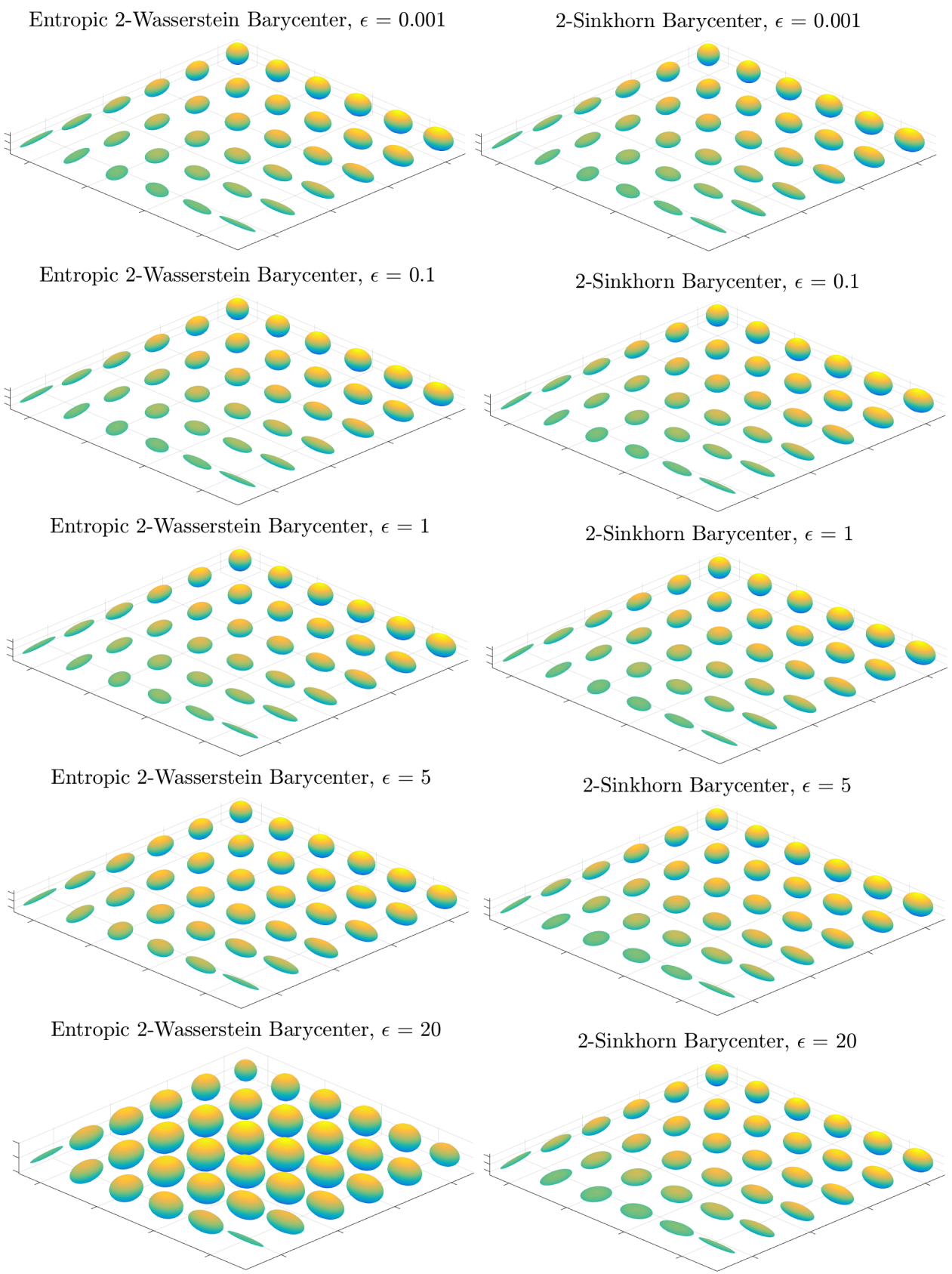}
    \caption{Barycentric spans of the four corner tensors under the entropic $2$-Wasserstein metric and the $2$-Sinkhorn divergence for varying $\epsilon$.}
    \label{fig:3d_barycenters}
\end{figure}

\section*{Acknowledgements}
This work was initiated during the authors' stay at the Institute for Pure and Applied Mathematics (IPAM), which is
supported by the National Science Foundation (Grant No. DMS-1440415). AM was supported by Centre for Stochastic Geometry and Advanced Bioimaging, funded by a grant from the Villum Foundation. AG acknowledges funding by the European Research Council under H2020/MSCA-IF ``OTmeetsDFT'' (grant no 795942).

\bibliographystyle{plain}
\bibliography{bib}

\section*{Appendix A: distributional solutions of Fokker-Planck equation}

We just recall the definition of distributional solution of the Fokker-Planck equation. \medskip

\begin{deff}
We say that a family of pairs measures/vector fields $(\eta_t,v_t)$ with $v_t \in L^1(\eta_t;\R^n)$ and $\int^1_0\Vert v_t\Vert_{L^1(\eta_t)}dt = \int^1_0\int_{\R^n}\vert v_t\vert d\eta_t dt$ solves the continuity equation on $]0,T[$ in the distributional sense if for any bounded and Lipschitz test function $f \in C^1_c(]0,T[\times \R^n)$
\[
\int^1_0\int_{\R^n}(\partial_tf)d\eta_tdt + \int^1_0\int_{\R^n}\left(\nabla f\cdot v_t-\frac{\ep}{2}\Delta f \right)d\eta_tdt = 0.
\]
\end{deff}

\section*{Appendix B: Alternative Proof of Theorem~\ref{thm:ent_was_gaussian} \textbf{b.}}\label{appendixB}
Recall, that by Propositions \ref{prop:restriction_to_centered} and \ref{prop:centered_gaussian_plan}, we can restrict to plans that are centered Gaussians, that is,

\begin{equation}
    \gamma = \N(0,\Gamma),\quad\Gamma 
    = \begin{bmatrix}
    K_1 & C^T \\
    C & K_2
    \end{bmatrix}.
    \label{eq:gaussian_gamma}
\end{equation}

Substituting \eqref{eq:gaussian_gamma} into \eqref{eq:mainKL} yields
\begin{equation}\label{eq:ent_was_gaussian2}
\begin{aligned}
    \OT_{d^2}^\epsilon(\mu_1, \mu_2) =& \min_{C\in \mathbb{R}^{n \times n}} F(C)\\
    :=& \min_{C\in \mathbb{R}^{n\times n}} \left\lbrace
    \Tr(K_1) + \Tr(K_2)\phantom{\frac{K}{K}}\right. \\
    &- 2 \left.\Tr(C) + \frac{\epsilon}{2}\log\left(
    \frac{\det(K_1 K_2)}{\det(\Gamma)}
    \right)
    \right\rbrace.
\end{aligned}
\end{equation}
The covariance matrix $\Gamma$ should be a symmetric positive-definite matrix, which is equivalent to its Schur complement $S(C)$ being positive definite, that is,
\begin{equation}
    S(C):= K_1 - C^T K_2^{-1} C \succeq 0. 
\end{equation}
If $S(C)$ fails to be strictly positive definite, $F(C)$ explodes to infinity, and so it suffices to consider $C$ so that
\begin{equation}
    S(C) \succ 0.
    \label{eq:schur_condition}
\end{equation}

Now recall the Schur block matrix determinant formula
\begin{equation}
    \det(\Gamma) = \det( S(C))\det(K_2).
    \label{eq:schur_determinant}
\end{equation}
Then, following the argumentation in the proof of~\cite[Prop. 7]{givens84}, when the value of $S(C)=S$ is fixed, we can write
\begin{equation}
    \max_{C~:~S(C) = S}\Tr(C) = \Tr\left(K_2^\frac{1}{2} (K_1 - S) K_2^\frac{1}{2}\right)^\frac{1}{2},
    \label{eq:tr_min_over_fiber}
\end{equation}
and so applying \eqref{eq:schur_determinant} and \eqref{eq:tr_min_over_fiber} to \eqref{eq:ent_was_gaussian2}, we get
\begin{equation}
\begin{aligned}
   \min_{C~:~S(C) = S} F(C) =& 
    \Tr(K_1) + \Tr(K_2) - 2 \Tr\left(K_2^\frac{1}{2} (K_1 - S) K_2^\frac{1}{2}\right)^\frac{1}{2}\\
    &+ \frac{\epsilon}{2}\left(
    \log\det(K_1) - \log\det(S)
    \right),
    \label{eq:ent_was_gaussian3}
\end{aligned}
\end{equation}
leaving us with the task of minimizing \eqref{eq:ent_was_gaussian3} with respect to $S$. Note that we could maximize \eqref{eq:tr_min_over_fiber} independently with respect to $C$, as $\det(\Gamma)$ is constant over the fiber $\{C: S(C) = S\}$. 

As F is strictly convex with respect to $S$, a solution to \eqref{eq:ent_was_gaussian2} can be found when the gradient of the expression with respect to $S$ is zero, leading to
\begin{equation}
    \nabla_S F(S) = K_2^\frac{1}{2} 
    \left(K_2^\frac{1}{2} \left(K_1 - S\right) K_2^\frac{1}{2} \right)^{-\frac{1}{2}}
    K_2^\frac{1}{2} - \frac{\epsilon}{2}S^{-1} = 0.
    \label{eq:ent_was_gaussian_gradient}
\end{equation}
Moving the second term to RHS, multiplying \eqref{eq:ent_was_gaussian_gradient} by $(K _1-S)^\frac{1}{2}$ from right, multiplying each side by their corresponding transposes, and some elementary manipulations of the equation, we arrive at a \emph{continuous algebraic Riccati equation} (CARE)
\begin{equation}
    \epsilon^2 K_1 - \epsilon^2 S - 4 S K_2 S = 0.
    \label{eq:riccati}
\end{equation}
In general, CAREs do not admit an analytical solution. However, we are in luck, as one can check that \eqref{eq:riccati} is solved by
\begin{equation}
    \hat{S} = \frac{\epsilon}{8} K_2^{-\frac{1}{2}} \left(
    -\epsilon I + \left(\epsilon^2 I + 16 K_2^{\frac{1}{2}} K_1 K_2^{\frac{1}{2}} \right)^\frac{1}{2}
    \right)K_2^{-\frac{1}{2}}.
\end{equation}

Finally, it is straight-forward to check that the solution $\hat{S}$ is indeed symmetric and positive-definite, and therefore satisfies \eqref{eq:schur_condition}. Plugging $\hat{S}$ in \eqref{eq:ent_was_gaussian3}, noticing that $K_2^\frac{1}{2}K_1K_2^\frac{1}{2}$ has same eigenvalues as $K_1K_2$, and some simplifications concludes the proof.

Now, we compute the OT quantity given $\hat{S}$. We first compute the trace term \eqref{eq:ent_was_gaussian3}, which gives
\begin{equation}
\begin{aligned}
    \Tr\left(
    K_2^\frac{1}{2}(K_1 - \hat{S}) K_2^\frac{1}{2}
    \right)^\frac{1}{2} &=
    \Tr\left(K_1K_2 - \frac{\epsilon}{8}\left(
    -\epsilon I + \left( \epsilon^2 I + 16K_1K_2\right)^\frac{1}{2}
    \right)\right)^\frac{1}{2}\\
    &= \Tr\left(
    \frac{\epsilon^2}{16}I +  K_1K_2 + \frac{\epsilon^2}{16}I
    - \frac{\epsilon^2}{8}\left(
    I + \frac{16}{\epsilon^2}K_1K_2
    \right)^\frac{1}{2}
    \right)^\frac{1}{2}\\
    &= \frac{\epsilon}{4}\Tr\left(
    \left(-I + \left(
    I + \frac{16}{\epsilon^2}K_1K_2
    \right)^\frac{1}{2} \right)^2
    \right)^\frac{1}{2}\\
    &= \frac{\epsilon}{4}\Tr\left(
    -I + \left( I + \frac{16}{\epsilon^2}K_1K_2\right)^\frac{1}{2}
    \right)\\
    &= \frac{\epsilon}{4}\left(\Tr\left(
    M_\epsilon
    \right) -2n
    \right)\\ 
\end{aligned}
\end{equation}
For the other term, write $\{\lambda_i\}_{i=1}^n$ for the eigenvalues of $K_1K_2$ and $m_i = 1 + \frac{16}{\epsilon^2}\lambda_i$
\begin{equation}
    \begin{aligned}
    \log\det(K_1) - \log\det(\hat{S})
    =& \log\det(K_1K_2)\\
    &- \log\det\left(\frac{\epsilon^2}{8}\left(
    -I + \left(
    I + \frac{16}{\epsilon^2}K_1K_2
    \right)^\frac{1}{2} 
    \right)
    \right)\\
    =& \sum_{i=1}^n \log \left(
    \frac{\epsilon^2(m_i - 1)}{16}
    \right)
    - \sum_{i=1}^n \log \left(
    \frac{\epsilon^2}{8}( m_i^\frac{1}{2} - 1)
    \right) \\
    =& \sum_{i=1}^n \log \left(
    \frac{1}{2}(1 + m_i^\frac{1}{2})
    \right)\\
    =& \sum_{i=1}^n\log\left(
    1 + \left(
    1 + \frac{16}{\epsilon^2}\lambda_i
    \right)^\frac{1}{2}
    \right) - n\log 2 \\
    =& \log\det(M_\epsilon) - n\log 2.
    \end{aligned}
\end{equation}

\end{document}